\theoremstyle{plain}
\newtheorem{theorem}{Theorem}[section]
\newtheorem{proposition}[theorem]{Proposition}
\theoremstyle{definition}
\newtheorem{definition}[theorem]{Definition}
\theoremstyle{remark}
\newtheorem{remark}[theorem]{Remark}
\icmltitlerunning{Pairwise Alignment Improves Graph Domain Adaptation}
\newcommand{\dG}{\mathcal{G}}
\newcommand{\dS}{\mathcal{S}}
\newcommand{\dV}{\mathcal{V}}
\newcommand{\dE}{\mathcal{E}}
\newcommand{\dX}{\mathcal{X}}
\newcommand{\dT}{\mathcal{T}}
\newcommand{\dU}{\mathcal{U}}
\newcommand{\dN}{\mathcal{N}}
\newcommand{\dH}{\mathcal{H}}
\newcommand{\dY}{\mathcal{Y}}
\newcommand{\mX}{\mathbf{X}}
\newcommand{\mY}{\mathbf{Y}}
\newcommand{\mA}{\mathbf{A}}
\newcommand{\mB}{\mathbf{B}}
\newcommand{\mC}{\mathbf{C}}
\newcommand{\mH}{\mathbf{H}}
\newcommand{\mM}{\mathbf{M}}
\newcommand{\mK}{\mathbf{K}}
\newcommand{\mw}{\mathbf{w}}
\newcommand{\mx}{\mathbf{x}}
\newcommand{\my}{\mathbf{y}}
\newcommand{\mh}{\mathbf{h}}
\newcommand{\mD}{\mathbf{D}}
\newcommand{\mE}{\mathbf{E}}
\newcommand{\mSigma}{\boldsymbol{\Sigma}}
\newcommand{\mbeta}{\boldsymbol{\beta}}
\newcommand{\mgamma}{\boldsymbol{\gamma}}
\newcommand{\mmu}{\boldsymbol{\mu}}
\newcommand{\mnu}{\boldsymbol{\nu}}
\newcommand{\malpha}{\boldsymbol{\alpha}}
\newcommand{\mL}{\mathbf{L}}
\newcommand{\mone}{\mathbf{1}}
\newcommand{\bP}{\mathbb{P}}
\newcommand{\bR}{\mathbb{R}}
\newcommand*{\ldbb}{\{\mskip-5mu\{}
\newcommand*{\rdbb}{\}\mskip-5mu\}}
\newcommand{\hy}{\hat{y}}
\newcommand{\hY}{\hat{Y}}
\newcommand{\proj}{Pair-Align\xspace}
\newcommand{\projew}{PA-CSS\xspace}
\newcommand{\projlw}{PA-LS\xspace}
\newcommand{\projb}{PA-BOTH\xspace}
\renewcommand{\epsilon}{\varepsilon}
\begin{document}

\twocolumn[
\icmltitle{Pairwise Alignment Improves Graph Domain Adaptation}



\icmlsetsymbol{equal}{*}

\begin{icmlauthorlist}
\icmlauthor{Shikun Liu}{gt}
\icmlauthor{Deyu Zou}{a}
\icmlauthor{Han Zhao}{uiuc}
\icmlauthor{Pan Li}{gt}
\end{icmlauthorlist}

\icmlaffiliation{gt}{Department of Electrical and Computer Engineering, Georgia Institute of Technology, Georgia, USA}
\icmlaffiliation{a}{School of Data Science, University of Science and Technology of China, Hefei, China}
\icmlaffiliation{uiuc}{Department of Computer Science, University of Illinois Urbana-Champaign, Champaign, USA}

\icmlcorrespondingauthor{Shikun Liu}{shikun.liu@gatech.edu}
\icmlcorrespondingauthor{Pan Li}{panli@gatech.edu}

\icmlkeywords{Machine Learning, Graph Domain Adaptation}

\vskip 0.3in
]



\printAffiliationsAndNotice{} 

\begin{abstract}

Graph-based methods, pivotal for label inference over interconnected objects in many real-world applications, often encounter generalization challenges, if the graph used for model training differs significantly from the graph used for testing. This work delves into Graph Domain Adaptation (GDA) to address the unique complexities of distribution shifts over graph data, where interconnected data points experience shifts in features, labels, and in particular, connecting patterns. We propose a novel, theoretically principled method, Pairwise Alignment (\proj) to counter graph structure shift by mitigating conditional structure shift (CSS) and label shift (LS). \proj uses edge weights to recalibrate the influence among neighboring nodes to handle CSS and adjusts the classification loss with label weights to handle LS. Our method demonstrates superior performance in real-world applications, including node classification with region shift in social networks, and the pileup mitigation task in particle colliding experiments. For the first application, we also curate the largest dataset by far for GDA studies. Our method shows strong performance in synthetic and other existing benchmark datasets. \footnote{
Our code and data are available at: \url{https://github.com/Graph-COM/Pair-Align}}
\end{abstract}

\section{Introduction}
\label{sec:intro}

Graph-based methods are commonly used to enhance label inference for interconnected objects by utilizing their connection patterns in many real-world applications~\cite{jackson2008social,szklarczyk2019string,shlomi2020graph}. Nonetheless, these methods often encounter generalization challenges, as the objects that lack labels and require inference may originate from domains that differ significantly from those with abundant labeled data, thereby exhibiting distinct interconnecting patterns. For instance, in fraud detection within financial networks, label acquisition may be constrained to specific network regions due to varying international legal frameworks and diverse data collection periods
~\cite{wang2019semi, dou2020enhancing}. Another example is particle filtering for Large Hadron Collider (LHC) experiments~\cite{highfield2008large}, where reliance on simulation-derived labeled data poses a challenge. These simulations may not accurately capture the nuances of real-world experimental conditions, potentially leading to discrepancies in label inference performance when applied to actual experiment scenarios~\cite{li2022semi, komiske2017pileup}.


Graph Neural Networks (GNNs) have recently demonstrated remarkable effectiveness in utilizing object interconnections for label inference tasks~\cite{kipf2016semi, hamilton2017inductive, velivckovic2018graph}. However, their effectiveness is often hampered by the vulnerability to variations in data distribution~\cite{ji2023drugood, ding2021closer, koh2021wilds}. This has sparked significant interest in developing GNNs capable of generalization from one domain (source domain $\dS$) to another, potentially different domain (target domain $\dT$). This field of study, known as graph domain adaptation (GDA), is gaining increasing attention. GDA distinguishes itself from the traditional domain adaptation setting, primarily because the data points in GDA are interlinked rather than independent. This non-IID nature of graph data renders traditional domain adaptation techniques suboptimal when applied to graphs. The distribution shifts in features, labels, and connecting patterns between objects may significantly impact the adaptation/generalization accuracy. Despite the recent progress made in GDA~\cite{wu2020unsupervised,you2023graph,zhu2021shift,liu2023structural}, current solutions still struggle to tackle the various shifts prevalent in real-world graph data. We provide a detailed discussion of the limitations of existing GDA methods in Section~\ref{sec:related}.

\begin{figure}[t]
\begin{center}
\vspace{-1mm}
\centerline{\includegraphics[trim={0.8cm 0.2cm 0cm 0cm},clip,width=1\columnwidth]{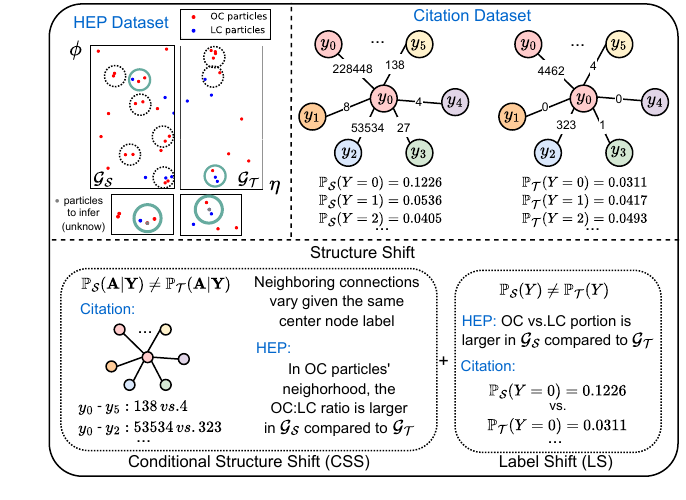}}
\vspace{-2mm}
\caption{\small{We illustrate structure shifts in real-world datasets: a) The HEP dataset in pileup mitigation tasks~\cite{bertolini2014pileup} has a shift in PU levels (change in the number of other collisions (OC) around the leading collision (LC) for proton-proton collision events), where $\dG_\dS$ is in PU30 and $\dG_\dT$ is in PU10; 
Here, in the green circles, the center nodes in grey are the particles whose labels are to be inferred. They have different ground-truth labels but the same neighborhood that includes one OC and one LC particle.
b) The citation MAG dataset shifts in regions, where the source graph contains papers in the US and the target graph contains papers in German. More statistics on graph distribution shift from real-world examples can be found in Appendix~\ref{app:shift_stats}}.}
\label{fig:shift}
\end{center}
\vspace{-9mm}
\end{figure}

This work conducts a systematic study of the distinct challenges present in GDA and proposes a novel method, named Pairwise Alignment (\proj) to tackle graph structure shift for node prediction tasks. Combined with feature alignment methods offered by traditional non-graph DA techniques~\cite{ganin2016domain,tachet2020domain}, \proj can in principle address a wide range of distribution shifts in graph data. 

Our analysis begins with examining a graph with its adjacency matrix $\mA$ and node labels $\mY$. We observe that graph structure shift ($\bP_\dS(\mA,\mY) \neq \bP_\dT(\mA,\mY)$) typically manifests as either conditional structure shift (CSS) or label shift (LS), or a combination of both. CSS refers to the change in neighboring connections among nodes within the same class ($\bP_\dS(\mA|\mY) \neq \bP_\dT(\mA|\mY)$) whereas LS denotes changes in the class distribution of nodes ($\bP_\dS(\mY) \neq \bP_\dT(\mY)$). These shifts are illustrated in Fig.~\ref{fig:shift} via examples in HEP and social networks, and are justified by statistics from several real-world applications. 

In light of the two types of shifts, the \proj method aims to estimate and subsequently mitigate the distribution shift in the neighboring nodes' representations for any given node class $c$. To achieve this, \proj employs a bootstrapping technique to recalibrate the influence of neighboring nodes in the message aggregation phase of GNNs. This strategic reweighting is key to effectively countering CSS.\ Concurrently, \proj calculates label weights to alleviate disparities in the label distribution between source and target domains (addressing LS) by adjusting the classification loss. \proj is depicted in Figure~\ref{fig:pipeline}.

\begin{figure*}[t]
\begin{center}
\centerline{\includegraphics[trim={1.4cm 0.3cm 0.1cm 1.2cm},clip,width=0.8\linewidth]{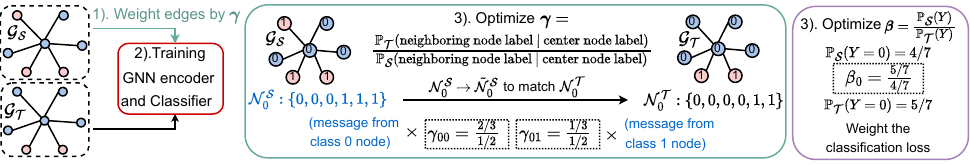}}
\vspace{-2mm}
\caption{\small{The pipeline contains modules in handling CSS with edge weights $\mgamma$ and handling LS with label weights $\mbeta$} 
}
\label{fig:pipeline}
\end{center}
\vspace{-2.3em}
\end{figure*}

To demonstrate the effectiveness of our pipeline, we curate the regional MAG data that partitions large citation networks according to the regions where papers got published~\cite{hu2020open, wang2020microsoft} to simulate the region shift. To the best of our knowledge, this is the largest dataset (of $\approx$ 380k nodes, 1.35M edges) to study GDA with data retrieved from the real-world database. 
We also include other graph data with shifts, like the pileup mitigation task studied in~\citet{liu2023structural}. Our method shows strong performance in these two applications. Moreover, our method also outperforms baselines significantly in synthetic datasets and other real-world benchmark datasets.

\section{Preliminaries and Related Works}  
\label{sec:prelim}

\subsection{Notation and The Problem Setup}
\label{subsec:setup}

We use capital letters, e.g., $Y$ to denote scalar random variables, and lower-case letters, e.g., $y$ to denote their realizations. The bold counterparts are used for their vector-valued correspondences, e.g., $\mY, \my$, and the calligraphic letters, e.g. $\dY$, are for the value spaces. We always use capital letters to denote matrices.
Let $\bP$ denote a distribution, whose subscript $\dU \in \{\dS, \dT\}$ indicates the domain it depicts, e.g. $\bP_\dS(Y)$. The probability of a realization, e.g. $Y=y$, can then be denoted as $\bP_\dS(Y=y)$.

\textbf{Graph Neural Networks (GNNs).} 
We use $\dG=(\dV,\dE,\mx)$ to denote a graph with the node set $\dV$, the edge set $\dE$ and node features $\mx=[\cdots x_u\cdots]_{u\in \dV}$. We focus on undirected graphs where the graph structure can also be represented as a symmetric adjacency matrix $\mA$ where the entries $A_{uv}= A_{vu} = 1$ when nodes $u,v$ form an edge and otherwise 0. 
GNNs take $\mA$ and $\mx$ as input and output node representations $\{h_u, \forall u\in \dV\}$. The standard GNNs~\cite{hamilton2017inductive} has a message-passing procedure. Specifically, with $h_u^{(1)}=x_u$,  for each node $v$ and each layer $k\in [L]:= \{1,\ldots,L\}$, 
\begin{equation}
    h_u^{(k+1)}=\text{UPT}\,(h_u^{(k)}, \text{AGG}\,(\ldbb h_v^{(k)}:v \in \dN_u\rdbb)), 
    \label{eq:GNN}
\end{equation}
where $\dN_v$ denotes the set of neighbors of node $v$ and $\ldbb\cdot\rdbb$ denotes a multiset. The AGG function aggregates messages from the neighbors, and the UPT function updates the node representations. The last-layer node representation $h_u^{(L)}$ is used to predict the label $y_u\in\dY$ in node classification tasks.

\textbf{Domain Adaptation (DA).} In DA, each domain $\dU\in \{S, T\}$ has its own joint feature and label distribution $\bP_\dU(X,Y)$. In the unsupervised setting, we have access to labeled source data $\{(x_i, y_i)\}_{i=1}^N$ and unlabeled target data $\{(x_i)\}_{i=1}^M$ IID sampled from the source and target domain respectively. The model comprises a feature encoder $\phi: \dX\rightarrow \dH$ and a classifier $g: \dH \rightarrow \dY$, with classification error in domain $\dU$ denoted as $\epsilon_\dU(g\circ\phi) = \bP_\dU(g(\phi(X)) \neq Y)$. The objective is to train the model with available data to minimize target error $\epsilon_\dT(g\circ\phi)$ when predicting target labels. A popular DA strategy is to learn domain-invariant representation, ensuring similar $\bP_\dS(H)$ and $\bP_\dT(H)$ and minimizing the source error $\epsilon_\dS(g\circ\phi)$ to retain classification capability simultaneously~\citep{zhao2019learning}. This is achieved through regularization of distance measures~\cite{long2015learning, zellinger2016central} or adversarial training~\cite{ganin2016domain, tzeng2017adversarial,zhao2018adversarial}. 

\textbf{Graph Domain Adaptation (GDA).} When extending unsupervised DA to the graph-structured data, we are given a source graph $\dG_{\dS} = (\dV_{\dS}, \dE_{\dS}, \mx_{\dS})$ with node labels $\my_{\dS}$ and a target graph $\dG_{\dT} = (\dV_{\dT}, \dE_{\dT}, \mx_{\dT})$. The specific distribution and shifts in graph-structured data will be defined in Sec.\ref{sec:method}. The objective is similar to DA as to minimize the target error, but with the encoder $\phi$ switched to a GNN to predict node labels $\my_{\dT}$ in the target graph. 

\subsection{Related Works and Existing Gaps} 
\label{sec:related}
GDA research falls into two main categories, aiming at addressing domain adaptation for node and graph classification tasks respectively. Often, graph-level GDA problems can view each graph as an independent sample, allowing extension of previous non-graph DA techniques to graphs, such as causal inference~\cite{rojas2018invariant, peters2017elements} (more are reviewed in Appendix~\ref{app:morerelated}). Conversely, node-level GDA presents challenges due to the interconnected nodes. Previous works mainly leveraged node representations as intermediaries to address these challenges.

The dominant idea of existing work on node-level GDA focused on aligning the marginal distributions of node representations, mostly over the last layer $\mh^{(L)}$, across two graphs inspired by the domain invariant learning in DA~\citep{liao2021information}. Some of them adopted adversarial training, such as~\cite{dai2022graph, zhang2019dane, shen2020adversarial}. UDAGCN~\cite{wu2020unsupervised} calculated the point-wise mutual information and inter-graph attention to exploit local and global consistency on top of the adversarial training. Other works were motivated by regularizing different distance measures. \citet{zhu2021shift} regularized over the central moment discrepancy~\cite{zellinger2016central}. \citet{you2023graph} minimized the Wasserstein-1 distance between the distributions of node representations and controlled GNN Lipschitz via regularizing graph spectral properties. \citet{wu2023non} introduced graph subtree discrepancy inspired by the WL subtree kernel~\cite{shervashidze2011weisfeiler} and suggested regularizing node representations after each layer of GNNs. Furthermore, \citet{zhu2022shift,zhu2023explaining} recognized that there could also be a shift in the label distribution, so they proposed to align the distribution of label/pseudo-label in addition to the marginal node representation.

Nonetheless, the marginal alignment methods above are inadequate when dealing with the structure shift consisting of CSS and LS. Firstly, these methods are flawed under LS. Based on $\bP_\dU(H^{(L)}) = \sum_{Y}\bP_\dU(H^{(L)}|Y)\bP_\dU(Y)$, 
even if the marginal alignment $\bP_\dS(H^{(L)}) = \bP_\dT(H^{(L)})$ is achieved, the conditional node representations  will still mismatch $\bP_\dS(H^{(L)}|Y)\neq  \bP_\dT(H^{(L)}|Y)$ under the LS, which induces more prediction error~\cite{zhao2019learning, tachet2020domain}. Secondly, they are suboptimal under CSS. In particular, consider the HEP example in Fig.~\ref{fig:shift} (the particles in the two green circles) where CSS may yield the case that the label of the center particle (node) shifts, albeit with an unchanged neighborhood distribution. In this case, methods using a shared GNN encoder for marginal alignment definitely fail to make the correct prediction.  

\citet{liu2023structural} have recently analyzed this issue by using an example based on contextual stochastic block model (CSBM)~\cite{deshpande2018contextual} (defined in Appendix~\ref{app:CSBM}).  

\begin{proposition}\cite{liu2023structural}
\label{eg}
Suppose the source and target graphs are generated from the CSBM model of $n$ nodes with the same label distributions and node feature distributions. The edge connection probabilities are set to present a conditional structure shift $\bP_\dS(\mA|\mY) \neq \bP_\dT(\mA|\mY)$ and showcase the example that the ground truth label of the center node changes given the same neighborhood distribution. Then, 
suppose a GNN encoder $\phi$ is shared across two domains, the target classification error $\epsilon_{\dT}(g\circ\phi)$ can be lower bounded by 0.25, where $g$ is the classifier. However, the GNN encoder $\phi$, if allowed to be adjusted according to the domains, can achieve   $\epsilon_{\dT}(g\circ\phi)\rightarrow 0$ as $n\rightarrow \infty$.
\vspace{-2mm}
\end{proposition}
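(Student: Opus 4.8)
My plan is to instantiate the structure shift with a two-block contextual stochastic block model, show by concentration that a shallow message-passing GNN collapses each class onto one of two cluster centers that the two domains \emph{share} but with the class-to-center assignment interchanged, read off the $\epsilon_\dT\ge0.25$ bound from the generic obstruction to invariant representations, and finish the upper bound with a readout tailored to the target graph. Concretely, I would take a class-balanced CSBM on $n$ nodes with $\bP_\dU(Y=0)=\bP_\dU(Y=1)=\tfrac12$ and node features $x_v\mid y_v\sim\mathcal N(\mu_{y_v},\sigma^2\mI)$, $\mu_0\neq\mu_1$, identical across domains, while the edge probabilities are $(p,q)$ with $p>q$ in $\dG_\dS$ and the swapped pair $(q,p)$ in $\dG_\dT$. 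This gives $\bP_\dS(Y)=\bP_\dT(Y)$ and $\bP_\dS(X\mid Y)=\bP_\dT(X\mid Y)$ yet $\bP_\dS(\mA\mid\mY)\neq\bP_\dT(\mA\mid\mY)$, and it realizes the ``same neighborhood, opposite label'' picture of Fig.~\ref{fig:shift}: a node whose neighbors are mostly drawn from the $\mu_0$-block is typically class $0$ under $\bP_\dS$ but class $1$ under $\bP_\dT$.

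For the concentration step, fix a class-$c$ node in domain $\dU$: its aggregated neighbor representation is an average of many roughly independent edge--feature contributions, so a Bernstein/Hoeffding bound gives that it concentrates, as $n\to\infty$, around $a_c^\dU:=(p_\dU\mu_c+q_\dU\mu_{1-c})/(p_\dU+q_\dU)$, and likewise layerwise for a Lipschitz GNN. Because the target probabilities swap the source ones, $a_c^\dT=a_{1-c}^\dS$, so the limiting node representations form the \emph{same} pair $\{a_0^\dS,a_1^\dS\}$ in both domains, separated by a constant distance (as $\mu_0\neq\mu_1$), with identical marginal law $\tfrac12\delta_{a_0^\dS}+\tfrac12\delta_{a_1^\dS}$ --- but with the class attached to each center flipped between $\dS$ and $\dT$, so in particular a node-level aggregator cannot even tell which domain a node came from.

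For the lower bound, take any shared encoder $\phi$ and classifier $g$ with $\bP_\dS(H^{(L)})=\bP_\dT(H^{(L)})$. Fitting the source --- which the DA objective also does --- drives $\epsilon_\dS(g\circ\phi)$ near $0$ and hence pins $g$ to the ``homophilic'' labeling of $\{a_0^\dS,a_1^\dS\}$ that is correct on $\dG_\dS$; the same $g$ then mislabels every node of $\dG_\dT$, so $\epsilon_\dT\to1$. More robustly, under marginal alignment $\epsilon_\dS+\epsilon_\dT\ge\sum_h\bP(H^{(L)}=h)\,\min_{\hat y}\big[\bP_\dS(Y\neq\hat y\mid h)+\bP_\dT(Y\neq\hat y\mid h)\big]$, whose right side equals $1$ here because the class-conditionals are interchanged on the whole representation mass, so $\epsilon_\dT\ge1-\epsilon_\dS\ge0.25$ once $\epsilon_\dS$ is small. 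The main obstacle is that this hinges on the encoder class: a shallow aggregator cannot recover the global homophily sign, which is exactly what forces the conflict, whereas a deeper GNN could in principle compare first- and second-hop aggregates, infer the homophily level, and adapt its readout --- so the claim is genuinely about enforcing invariance at the (last) representation layer, and pinning that down cleanly is the delicate step.

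For achievability, note that once the invariance constraint is dropped one may keep the same aggregation encoder but read the clusters off with the \emph{heterophilic} rule ($a_1^\dS\mapsto 0$, $a_0^\dS\mapsto 1$); by the concentration above and $a_0^\dS\neq a_1^\dS$ this yields $\epsilon_\dT(g\circ\phi)\to0$ as $n\to\infty$ (at the cost of a large source error, which is irrelevant to this half of the statement). Hence the target is perfectly learnable in the limit, so the $0.25$ floor is an artifact of forcing marginal invariance together with source fitting, not of the task itself.
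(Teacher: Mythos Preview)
The paper does not supply its own proof of this proposition --- it is quoted verbatim from \cite{liu2023structural} (StruRW) and used only as motivation for why marginal alignment fails under conditional structure shift. There is therefore nothing in the present manuscript to compare your argument against; the original construction and proof live in the StruRW paper.

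On its own merits, your instantiation (balanced two-block CSBM with intra/inter edge probabilities $(p,q)$ in $\dS$ and the swap $(q,p)$ in $\dT$) is exactly the natural one, and the concentration step showing that mean-aggregated class centers collapse to $a_c^{\dU}=(p_\dU\mu_c+q_\dU\mu_{1-c})/(p_\dU+q_\dU)$ with $a_c^{\dT}=a_{1-c}^{\dS}$ is sound. Your achievability half (drop the invariance constraint, read off with the heterophilic rule) is also fine.

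There is, however, a genuine gap in your lower bound. From the swapped conditionals you correctly derive $\epsilon_\dS+\epsilon_\dT\ge 1$ in the limit, but the step ``$\epsilon_\dT\ge 1-\epsilon_\dS\ge 0.25$ once $\epsilon_\dS$ is small'' does not follow from marginal alignment alone: a classifier that outputs the opposite of the source-optimal rule still satisfies $\bP_\dS(H^{(L)})=\bP_\dT(H^{(L)})$ (the marginals are identical point masses on $\{a_0^\dS,a_1^\dS\}$ regardless of $g$) yet has $\epsilon_\dS\approx 1$ and $\epsilon_\dT\approx 0$. The proposition as stated does not explicitly constrain $\epsilon_\dS$, so the specific constant $0.25$ must come either from an implicit assumption that the classifier is also fit to the source (standard in DA, and you do mention this, but then the bound should be stated conditionally), or from a particular parameter choice in the original StruRW construction --- for instance, partially overlapping rather than fully swapped edge probabilities, or an unbalanced label ratio --- that produces residual error even for the best domain-specific classifier. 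Your argument delivers the qualitative obstruction but not the precise constant; you should consult the StruRW paper to see which mechanism actually yields the $0.25$.
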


To tackle this issue, \citet{liu2023structural} proposed the StruRW method to reweight edges in the source graph based on weights derived from the CSBM model. However, StruRW still suffers from many issues. 
We will provide a more detailed comparison with StruRW in Sec.~\ref{sec:compstrurw}. To the best of our knowledge, our method is the first effort to address both CSS and LS in a principled way.

\section{Pairwise Alignment for Structure Shift}
\label{sec:method}
We first define shifts in graphs as feature shift and structure shift, the latter includes both the Conditional Structure Shift (CSS) and the Label Shift (LS). Then, we analyze the objective of solving structure shift and propose our pairwise alignment algorithm that handles both CSS and LS.

\subsection{Distribution Shifts in Graph-structured Data}
Sec.~\ref{sec:related} shows the sub-optimality of enforcing marginal node representation alignment under structure shifts. In fact, the necessity of conditional distribution alignment $\bP_\dS(H|Y) = \bP_\dT(H|Y)$ to deal with feature shift $\bP_\dS(X|Y) \neq \bP_\dT(X|Y)$ has been explored in non-graph scenarios, where $X$ denotes a feature vector and $H$ is the representation after $X$ passes through the encoder, i.e., $H=\phi(X)$. 
Early efforts such as \citet{zhang2013domain, gong2016domain} assumed that the shift in conditional representations from domain $\dS$ to domain $\dT$ follows a linear transformation and optimized conditional alignment by introducing an extra linear transformation to the source domain encoder to enhance conditional alignment $\bP_\dS(H|Y) = \bP_\dT(H|Y)$. 
Subsequent works learned the representations with adversarial training to enforce conditional alignment by aligning the joint distribution over the label predictions and representations~\cite{long2018conditional, cicek2019unsupervised}. Later, some works additionally considered label shift~\cite{tachet2020domain, liu2021adversarial} and proposed to match the label weighted $\bP_\dS^{\text{lw}}(H)$ with $\bP_\dT(H)$ with label weights estimated following~\citet{lipton2018detecting}. 

In light of the limitations of existing works and the effort in non-graph DA research, it becomes clear that marginal alignment of node representations is insufficient for GDA, which underscores the importance of achieving conditional node representation alignment. 

To address various distribution shifts for GDA in principle, we first decouple the potential distribution shifts in graph data by defining feature shift and structure shift in terms of conditional distributions and label distributions. Our data generation process can be characterized by the following model: $\mX\leftarrow \mY\rightarrow \mA$, where labels are drawn at each node first, and then edges as well as features at each node are generated. Under this model, we define the following feature shift, which denotes the change of the conditional feature generation process given the labels. 
\begin{definition}[Feature Shift]
Assume the node features $x_u$, $u\in\dV$ are IID sampled from $\bP(X|Y)$ given node labels $y_u$. Therefore, the conditional distribution of $\mx|\my$, $\bP(\mX = \mx|\mY = \my) =\prod_{u\in \dV} \bP(X = x_u|Y = y_u)$. 
The \underline{feature shift} is then defined as $\bP_\dS(X|Y)\neq \bP_\dT(X|Y)$. 
\end{definition}

\begin{definition}[Structure Shift] Given the joint distribution of the adjacency matrix and node labels $\bP(\mA,\mY)$. The \underline{Structure Shift} is defined as $\bP_\dS(\mA,\mY) \neq \bP_\dT(\mA, \mY)$. 
With decomposition as $\bP_\dU(\mA, \mY) = \bP_\dU(\mA|\mY) \bP_\dU(\mY)$, it results in \underline{Conditional Structure Shift} (CSS) and \underline{Label Shift} (LS):
\begin{itemize}[noitemsep,nolistsep]
    \item CSS: $\bP_\dS(\mA|\mY) \neq \bP_\dT(\mA|\mY)$
    \item LS: $\bP_\dS(\mY) \neq \bP_\dT(\mY)$
\end{itemize}
\label{def:GSS}
\vspace{-1.5mm}
\end{definition}
As shown in Fig.~\ref{fig:shift}, structure shift consisting of CSS and LS widely exists in real-world applications. Feature shift here, which is equivalent to the conditional feature shift in non-graph literature, can be addressed by adapting conventional conditional shift methods. So, later, we assume that feature shift has been addressed, i.e., $\bP_\dS(X|Y) = \bP_\dT(X|Y)$. 

In contrast, structure shift is unique to graph data due to the non-IID nature caused by node interconnections. Moreover, the learning of node representations is intrinsically linked to the graph structure as the GNN encoder takes $\mA$ as input. Therefore, even if after one layer of GNN, $\bP_\dS(H^{(k)}|Y) = \bP_\dT(H^{(k)}|Y)$ is achieved, CSS could still lead to misalignment of conditional node representation distributions in the next layer $\bP_\dS(H^{(k+1)}|Y) \neq \bP_\dT(H^{(k+1)}|Y)$. Accordingly, a tailored algorithm is needed to remove this effect of CSS, which, when combined with techniques for LS, can effectively resolve the structure shift.

\subsection{Addressing Conditional Structure Shift}
\label{sec:CSS}
To remove the effect of CSS under GNN, the objective is to guarantee $\bP_\dS(H^{(k+1)}|Y) = \bP_\dT(H^{(k+1)}|Y)$ given $\bP_\dS(H^{(k)}|Y) = \bP_\dT(H^{(k)}|Y)$. Considering one layer of GNN encoding in Eq.~\eqref{eq:GNN}: given $\bP_\dS(H^{(k)}|Y) = \bP_\dT(H^{(k)}|Y)$ 
, the mismatch in $k+1$ layer may arise from the distribution shift of the neighboring multiset $\ldbb h_v^{(k)}:v \in \dN_u\rdbb$ given the center node label $y_u$. 
Therefore, the key is to transform the neighboring multisets in the source graph to achieve conditional alignment with the target domain regarding the distributions of such neighboring multisets. 
Our approach first starts with 
a sufficient condition for such conditional alignment.

\begin{restatable}[Sufficient conditions for addressing CSS]{theorem}{mythm}
\label{thm:decompose}
    Given the following assumptions 
    \setlist{nolistsep}
    \begin{itemize}[noitemsep,leftmargin=3mm]
        \item (Conditional Alignment in the previous layer $k$) $\bP_\dS(H^{(k)}|Y) = \bP_\dT(H^{(k)}|Y)$ and $\forall u \in \dV_\dU$, given $Y=y_u$, $h_u^{(k)}$ is independently sampled from $\bP_\dU(H^{(k)}|Y)$.
        \item (Edge Conditional Independence) Given node labels $\my$, edges mutually independently exist in the graph. 
    \end{itemize}
    if there exists a transformation that modifies the neighborhood of node $u$: $\dN_u \rightarrow \Tilde{\dN}_u, \forall u \in \dV_\dS$, such that $\bP_\dS(|\Tilde{\dN}_u||Y_u = i) =\bP_\dT(|\dN_u||Y_u = i)$ and $\bP_\dS(Y_v |Y_u = i, v\in \Tilde{\dN}_u) = \bP_\dT(Y_v|Y_u = i, v\in \dN_u)$, $\forall i\in \dY$, then $\bP_\dS(H^{(k+1)}|Y) = \bP_\dT(H^{(k+1)}|Y)$ is satisfied.
\end{restatable}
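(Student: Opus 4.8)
The plan is to condition on the center label $Y_u = i$ throughout and to track how the conditional law of $h_u^{(k+1)}$ is assembled from ingredients that the two hypotheses control. Since $h_u^{(k+1)} = \text{UPT}(h_u^{(k)}, \text{AGG}(\ldbb h_v^{(k)} : v\in\Tilde{\dN}_u\rdbb))$ is a fixed measurable map applied to the center representation $h_u^{(k)}$ and to the neighbor-representation multiset $M_u := \ldbb h_v^{(k)} : v\in\Tilde{\dN}_u\rdbb$ (resp.\ $\ldbb h_v^{(k)} : v\in\dN_u\rdbb$ in the target), it suffices to show that the conditional law of the pair $(h_u^{(k)}, M_u)$ given $Y_u=i$ is identical in the transformed source and in the target. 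The first move is to note these two coordinates are conditionally independent given $Y_u=i$: by the ``independently sampled'' clause of the conditional-alignment assumption, conditioning on the whole label vector $\my$ makes the representations $\{h_v^{(k)}\}_v$ mutually independent with $h_v^{(k)}\sim\bP_\dU(H^{(k)}|Y=y_v)$, and the edge set (hence $\dN_u$ and its transform) is a function of $\my$ and the independent edge indicators only; so given $\my$ the variable $h_u^{(k)}$ is independent of $(\Tilde{\dN}_u,\{h_v^{(k)}\}_{v\neq u})$, and averaging out $\{y_v\}_{v\neq u}$ — which does not change the law of $h_u^{(k)}$ given $y_u$ — yields $h_u^{(k)}\perp M_u \mid Y_u$. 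Because $\bP_\dS(H^{(k)}|Y)=\bP_\dT(H^{(k)}|Y)$, the center coordinate is already aligned, and the task reduces to aligning the conditional law of $M_u$ given $Y_u=i$.

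Next I would peel $M_u$ apart into a ``labels, then representations'' description. Conditioning further on the neighbor-label multiset $\ldbb Y_v : v\in\Tilde{\dN}_u\rdbb = \ldbb j_1,\dots,j_m\rdbb$, the same independence clause gives that $M_u$ is distributed as $\ldbb H_1,\dots,H_m\rdbb$ with $H_\ell\sim\bP_\dU(H^{(k)}|Y=j_\ell)$ drawn independently; since $\bP_\dS(H^{(k)}|Y)=\bP_\dT(H^{(k)}|Y)$, this conditional law depends only on the label multiset, not on the domain. Hence it is enough to show that the conditional law of the neighbor-label multiset given $Y_u=i$ coincides across domains. Here I invoke the edge-conditional-independence assumption, together with the i.i.d.\ node-label draws and endpoint-label-dependent edge probabilities implicit in the generative model $\mX\leftarrow\mY\rightarrow\mA$: conditionally on $Y_u=i$, each other node $v$ is a neighbor of $u$ independently with some probability and, given that it is, carries label $j$ with probability $\bP_\dU(Y_v=j\mid Y_u=i,\,v\in\dN_u)$, independently across $v$. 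Consequently the neighbor-label multiset is produced by the two-stage recipe ``draw the degree $m$ from $\bP_\dU(|\dN_u|\mid Y_u=i)$, then draw $m$ i.i.d.\ labels from $\bP_\dU(Y_v\mid Y_u=i,\,v\in\dN_u)$,'' and the edge transformation is precisely required to keep the source within this same class. The two hypotheses $\bP_\dS(|\Tilde{\dN}_u|\mid Y_u=i)=\bP_\dT(|\dN_u|\mid Y_u=i)$ and $\bP_\dS(Y_v\mid Y_u=i,\,v\in\Tilde{\dN}_u)=\bP_\dT(Y_v\mid Y_u=i,\,v\in\dN_u)$ say exactly that both stages agree, so the neighbor-label multiset laws agree, hence the $M_u$ laws agree, hence the $(h_u^{(k)},M_u)$ laws agree, hence $\bP_\dS(H^{(k+1)}|Y)=\bP_\dT(H^{(k+1)}|Y)$.

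I expect the delicate point to be the ``degree, then i.i.d.\ neighbor labels'' factorization of the neighbor-label multiset and, in particular, checking that it holds for the \emph{transformed} neighborhood $\Tilde{\dN}_u$ and not merely for the original one: this relies on edge conditional independence plus the exchangeability of potential neighbors (node labels i.i.d., edge probabilities depending only on endpoint labels), and one must verify that the edge-reweighting transformation keeps the transformed source graph inside this model class — otherwise matching only the two marginal quantities in the hypotheses would not pin down the full multiset law. A secondary care point is the measure-theoretic handling of multisets (it is cleanest to work with exchangeable ordered tuples and push forward through the symmetric $\text{AGG}$ map) and the conditional-independence bookkeeping for the center node in the first step. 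Once these are secured, the rest is routine propagation of pushforwards through the fixed $\text{AGG}$ and $\text{UPT}$ maps.
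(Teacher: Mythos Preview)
Your proposal is correct and follows essentially the same approach as the paper's proof: both reduce to showing the joint conditional law of $(h_u^{(k)},\,\ldbb h_v^{(k)}:v\in\dN_u\rdbb)$ given $Y_u=i$ factors into (i) the center law, (ii) the degree law, and (iii) an i.i.d.\ product of mixtures $\sum_{j}\bP_\dU(H^{(k)}|Y=j)\,\bP_\dU(Y_v=j\mid Y_u=i,\,v\in\dN_u)$, so that matching the two hypothesized quantities forces alignment. Your write-up is in fact more explicit than the paper about why the ``degree, then i.i.d.\ neighbor labels'' factorization holds and why it must also hold for the transformed $\Tilde{\dN}_u$ (the paper's step (c) simply asserts the corresponding independence), but the underlying argument is the same.
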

\vspace{-0.25em}
\begin{remark} The assumption edge conditional independence essentially assumes an SBM model for the graph structure, which is widely adopted for graph learning algorithm  analysis~\cite{liu2023structural,wei2022understanding}.
\vspace*{-0.5em}
\end{remark}
This theorem reveals that it suffices to align two distributions with the multiset transformation on the source graph: 
1) the distribution of the degree/cardinality of the neighbors $\bP_\dU(|\dN_u| | Y_u)$ and 2) the node label distribution in the neighborhood $\bP_\dU(Y_v|Y_u, v\in \dN_u)$, both conditioned on the center node label $Y_u$. 

\textbf{Multiset Alignment.} Bootstrapping the elements in the multisets can be used to align the two distributions. In the context of GNNs, which typically employ sum/mean pooling functions to aggregate the multisets, such a bootstrapping process can be translated into assigning weights to different neighboring nodes given their labels and the center node's label. 
Moreover, practically, mean pooling is often the preferred choice due to its superior empirical performance, which is also observed in our experiments. 
Aligning the distributions of the node degrees $\bP_\dU(|\dN_u| | Y_u)$ yields negligible impact with mean pooling~\cite{xu2018powerful}. Therefore, our method focuses on aligning the  distribution 
$\bP_\dU(Y_v|Y_u, v\in \dN_u)$, 
in which the edge weights are the ratios of such probabilities across two domains:
\begin{definition}
    Assume $\bP_\dS(Y_v = j|Y_u = i, v\in \dN_u) > 0, \forall i,j \in \dY$, we define $\mgamma \in \bR^{|\dY|\times |\dY|}$ as:
    \vspace{-0.25mm}
    \begin{align*}
        [\mgamma]_{i,j} = \frac{\bP_\dT(Y_v = j|Y_u = i, v\in \dN_u)}{\bP_\dS(Y_v = j|Y_u = i, v\in \dN_u)}, \forall i, j \in \dY
    \end{align*}    
\end{definition}
\vspace{-2.5mm}
where $[\mgamma]_{i,j}$ is the density ratio between the target and source graphs from class-$i$ nodes to class-$j$ nodes. Note that $[\mgamma]_{i,j}\neq [\mgamma]_{j,i}$. 
To differentiate the encoding with and without the adjusted edge weights for the source and target graphs, we denote the operation that first adjusts the edge weights $\mgamma$ and then apply GNN encoding as $\phi_{\mgamma}$ while the one that directly applies GNN encoding as $\phi$. 
By assuming the conditions made in Thm~\ref{thm:decompose} and applying them in an iterative manner for each layer of GNN, the last-layer alignment $\bP_\dS(H^{(L)}|Y) = \bP_\dT(H^{(L)}|Y)$ can be achieved with $\mh^{(L)}_\dS = \phi_{\gamma}(\mx_\dS, \mA_\dS)$ and $\mh^{(L)}_\dT = \phi(\mx_\dT, \mA_\dT)$. Note that based on conditional alignment in the distribution of randomly sampled node representations $\bP_\dS(H^{(L)}|Y) = \bP_\dT(H^{(L)}|Y)$ and under the conditions in Thm~\ref{thm:decompose}, $ \bP_\dS(\mH^{(L)}|\mY) = \bP_\dT(\mH^{(L)}|\mY)$ can also be achieved in the matrix form.

\textbf{$\mgamma$ Estimation.} Till now we explain why edge reweighting using $\mgamma$ can address CSS for GNN encoding, we will detail our pairwise alignment method to obtain $\mgamma$ next. By definition, $\mgamma$ can be decomposed into another two weights.
 \begin{definition}
    Assume $\bP_\dS(Y_u = i, Y_v = j|e_{uv} \in \dE_\dS) > 0, \forall i,j \in \dY$, we define $\mw \in \bR^{|\dY|\times |\dY|}$ and $\malpha \in \bR^{|\dY|\times 1}$ as:
    \vspace{-0.2mm}
    \begin{align*}
        [\mw]_{i,j} &= \frac{\bP_\dT(Y_u = i, Y_v = j|e_{uv} \in \dE_\dT)}{\bP_\dS(Y_u = i, Y_v = j|e_{uv} \in \dE_\dS)}, \\
        [\malpha]_{i} &= \frac{\bP_\dT(Y_u = i|e_{uv} \in \dE_\dT)}{\bP_\dS(Y_u = i|e_{uv} \in \dE_\dS)}, \forall i,j \in \dY
    \end{align*}  
\end{definition}
\vspace{-2.5mm}
and $\mgamma$ can be estimated via
\vspace{-0.5mm}
\begin{align}
    \mgamma = \text{diag}(\malpha)^{-1}\mw
\label{eq:calcgamma}
\end{align}
For domain $\dU$, $\bP_\dU(Y_u, Y_v|e_{uv} \in \dE_\dU)$ is the joint distribution of the label pairs of two nodes that form an edge, which can be computed for domain $\dS$ but not for domain $\dT$. $\bP_\dU(Y_u|e_{uv} \in \dE_\dU)$ can be obtained by marginalizing $\bP_\dU(Y_u, Y_v|e_{uv} \in \dE_\dU)$ over $Y_v$, as $\bP_\dU(Y_u = i|e_{uv} \in \dE_\dU) = \sum_{j\in \dY}\bP_\dU(Y_u=i, Y_v=j|e_{uv} \in \dE_\dU)$.  
Also, it is crucial to differentiate $\bP_\dU(Y_u|e_{uv} \in \dE_\dU)$ from $\bP_\dU(Y)$: the former is the label distribution of the end node conditioned on an edge, 
while the latter is the label distribution of nodes without conditions. Given $\mw$ and two distributions computed over the source graph, $\malpha$ can be derived via
\begin{align}
    [\malpha]_i = \frac{\sum_{j\in \dY}([\mw]_{i,j}\bP_\dS(Y_u = i, Y_v = j|e_{uv} \in \dE_\dS))}{\bP_\dS(Y_u = i|e_{uv} \in \dE_\dS)},
\label{eq:calcalpha}
\end{align}
so next, we proceed to estimate $\mw$ to complete $\mgamma$ calculation. 


\textbf{Pair-wise Alignment.} Note that if $(Y_u, Y_v)$ is viewed as a type for edge $e_{uv}$,  $\bP_\dU(Y_u, Y_v|e_{uv} \in \dE_\dU)$ essentially represents an edge-type distribution. 
In practice, we use \emph{pair-wise} pseudo-label distribution alignment to estimate $\mw$. 

\begin{definition}
    Let $\mSigma \in \bR^{|\dY|^2 \times |\dY|^2}$  denote the matrix that stands for the joint distribution of the predicted types of edges and the true types of edges, and $\mnu\in \bR^{|\dY|^2 \times 1}$ denote the distribution of the predicted types of edges for the target domain, $\forall i, j, i', j' \in \dY$
    \begin{align*}
        [\mSigma]_{ij,i'j'} &= \bP_\dS(\hY_u = i, \hY_v = j, Y_u = i', Y_v = j'| e_{uv} \in \dE_\dS)\\
        [\mnu]_{ij} &= \bP_\dT(\hY_u = i, \hY_v = j| e_{uv} \in \dE_\dT)
    \end{align*}
\end{definition}
\vspace{-1.5mm}

Specifically, similar to~\citet[Lemma 3.2]{tachet2020domain}, Lemma~\ref{lem:optw} shows that $\mw$ can be obtained by 
solving the linear system $\mnu = \mSigma\mw$ if $\bP_\dS(H^{(L)}|Y) = \bP_\dT(H^{(L)}|Y)$ is satisfied. 
\begin{restatable}{lemma}{wlemma}
    If $\bP_\dS(H^{(L)}|Y) = \bP_\dT(H^{(L)}|Y)$ is satisfied, and node representations are conditionally independent of graph structures given node labels, then $\mnu = \mSigma\mw$.
    \label{lem:optw}
\end{restatable}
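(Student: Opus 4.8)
The plan is to expand the target edge-type prediction distribution $[\mnu]_{ij}$ by the law of total probability over the \emph{true} edge types $(i',j')\in\dY\times\dY$, and then to show that the conditional prediction probability $\bP_\dU(\hY_u = i, \hY_v = j \mid Y_u = i', Y_v = j', e_{uv}\in\dE_\dU)$ does not depend on the domain $\dU\in\{\dS,\dT\}$. Once that domain-invariance is in hand, the only domain-dependent factor left is the true edge-type distribution, and by the definition of $\mw$ we have $\bP_\dT(Y_u = i', Y_v = j'\mid e_{uv}\in\dE_\dT) = [\mw]_{i',j'}\,\bP_\dS(Y_u = i', Y_v = j'\mid e_{uv}\in\dE_\dS)$; recombining this source probability with the (domain-invariant) conditional prediction probability reproduces exactly the entry $[\mSigma]_{ij,i'j'}$, giving $[\mnu]_{ij} = \sum_{i',j'}[\mSigma]_{ij,i'j'}[\mw]_{i',j'}$, i.e.\ $\mnu = \mSigma\mw$. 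This mirrors the structure of \citet[Lemma 3.2]{tachet2020domain}, but with node pairs playing the role of samples.

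The crux is the domain-invariance claim, which I would establish in three steps. (i) Since node representations are conditionally independent of the graph structure given node labels, additionally conditioning on the structural event $e_{uv}\in\dE_\dU$ is vacuous for the representations, and the same idealization (each endpoint's representation is an independent draw from its label-conditional law, exactly as in the hypotheses of Theorem~\ref{thm:decompose}) lets one factor the joint: $\bP_\dU(H_u^{(L)}, H_v^{(L)}\mid Y_u = i', Y_v = j', e_{uv}\in\dE_\dU) = \bP_\dU(H_u^{(L)}\mid Y_u = i')\,\bP_\dU(H_v^{(L)}\mid Y_v = j')$. (ii) The conditional alignment $\bP_\dS(H^{(L)}\mid Y) = \bP_\dT(H^{(L)}\mid Y)$ matches each factor across domains, so the joint law of $(H_u^{(L)}, H_v^{(L)})$ given $(Y_u, Y_v) = (i',j')$ is identical in the two domains. (iii) The predictions are obtained by applying the \emph{same} classifier node-wise, $\hY_u = g(H_u^{(L)})$, so pushing the identical joint laws forward under $(H_u^{(L)}, H_v^{(L)})\mapsto (g(H_u^{(L)}), g(H_v^{(L)}))$ yields $\bP_\dT(\hY_u = i, \hY_v = j\mid Y_u = i', Y_v = j', e_{uv}\in\dE_\dT) = \bP_\dS(\hY_u = i, \hY_v = j\mid Y_u = i', Y_v = j', e_{uv}\in\dE_\dS)$. (On the source side $H^{(L)}$ is produced by the reweighted encoder $\phi_{\mgamma}$ so that the alignment hypothesis applies; reweighting changes edge weights but not the edge set, so conditioning on $e_{uv}\in\dE_\dS$ is unambiguous.)

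Assembling the pieces: write $[\mnu]_{ij} = \sum_{i',j'}\bP_\dT(\hY_u = i, \hY_v = j\mid Y_u = i', Y_v = j', e_{uv}\in\dE_\dT)\,\bP_\dT(Y_u = i', Y_v = j'\mid e_{uv}\in\dE_\dT)$, replace the first factor by its source counterpart using step (iii) and the second by $[\mw]_{i',j'}\,\bP_\dS(Y_u = i', Y_v = j'\mid e_{uv}\in\dE_\dS)$ using the definition of $\mw$, and finally apply the chain rule in reverse, $\bP_\dS(\hY_u = i, \hY_v = j\mid Y_u = i', Y_v = j', e_{uv}\in\dE_\dS)\,\bP_\dS(Y_u = i', Y_v = j'\mid e_{uv}\in\dE_\dS) = [\mSigma]_{ij,i'j'}$, to conclude $\mnu = \mSigma\mw$.

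I expect the main obstacle to be the rigorous justification of the factorization in step (i): the receptive fields of two adjacent nodes $u,v$ genuinely overlap, so their GNN representations are not literally conditionally independent given $(Y_u, Y_v)$; the lemma side-steps this through the same SBM-style idealization already invoked in Theorem~\ref{thm:decompose}, and I would state explicitly that treating each $h_u^{(L)}$ as an independent sample from $\bP_\dU(H^{(L)}\mid Y_u)$ is the operative reading of the ``conditionally independent of graph structures given node labels'' hypothesis. A secondary remark worth one sentence is that the lemma is a population-level statement conditioned on the alignment \emph{already} holding, so it is not circular with the fact that $\mgamma$ (and hence $\phi_{\mgamma}$) is itself built from $\mw$.
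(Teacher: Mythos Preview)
Your proposal is correct and follows essentially the same route as the paper's proof: expand $[\mnu]_{ij}$ by total probability over the true edge type $(i',j')$, use the conditional-independence hypothesis to factor $\bP_\dU(\hY_u=i,\hY_v=j\mid Y_u=i',Y_v=j',e_{uv}\in\dE_\dU)$ into two per-node terms, transfer each term from $\dT$ to $\dS$ via the alignment $\bP_\dS(H^{(L)}\mid Y)=\bP_\dT(H^{(L)}\mid Y)$, and then recombine using the definitions of $\mw$ and $\mSigma$. Your explicit discussion of the SBM-style idealization behind step~(i) and the non-circularity remark are helpful elaborations, but the argument itself matches the paper's.
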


Empirically, we estimate $\hat{\mSigma}$ and $\hat{\mnu}$ based on  the classifier $g$, where $g(h_u^{(L)})$ denotes the soft label of node $u$. Specifically, 
\begin{gather*}
    [\hat{\mSigma}]_{ij,i'j'} = \frac{1}{|\dE_\dS|}\sum_{e_{uv}\in \dE_\dS, y_u = i', y_v = j'}[g(h_u^{(L)})]_i \times [g(h_v^{(L)})]_j\\
    [\hat{\mnu}]_{ij} = \frac{1}{|\dE_\dT|}\sum_{e_{u'v'} \in \dE_\dT}[g(h_{u'}^{(L)})]_i \times [g(h_{v'}^{(L)})]_j. 
\end{gather*}

Then, $\mw$ can be solved via:
\begin{align}
    &\min_{\mw} \quad \lVert \hat{\mSigma} \mw  - \hat{\mnu} \rVert_2, \label{eq:optw_noreg} \quad \text{s.t.} \; \mw \geq 0,\,\text{and} \\ &\,\sum_{i,j}[\mw]_{i,j}\bP_\dS(Y_u=i, Y_v=j | e_{uv} \in \dE_\dS) = 1, \nonumber
\end{align}
where the constraints guarantee a valid target edge type distribution $\bP_\dT(Y_u, Y_v | e_{uv} \in \dE_\dT)$. For undirected graphs, $\mw$ can be symmetric, so we may add an extra constraint $[\mw]_{i,j}=[\mw]_{j,i}$. 
Finally, we calculate $\malpha$ following Eq.~\eqref{eq:calcalpha} with the obtained $\mw$ and compute $\mgamma$ via Eq.~\eqref{eq:calcgamma}. Note that in Appendix~\ref{sec:robust}, we will discuss how to improve the robustness of the estimations of $\mw$ and $\mgamma$.

In summary, handling CSS is an iterative process where we begin by employing an estimated $\mgamma$ as edge weights on the source graph to reduce the gap between $\bP_\dS(H^{(L)}|Y)$ and $\bP_\dT(H^{(L)}|Y)$ due to Thm~\ref{thm:decompose}.
 With a reduced gap, we can estimate $\mw$ more accurately (due to Lemma~\ref{lem:optw}) and thus improve the estimation of $\mgamma$. Through iterative refinement, $\mgamma$ progressively enhances the conditional alignment $\bP_\dS(H^{(L)}|Y)=\bP_\dT(H^{(L)}|Y)$ to address CSS. 

\subsection{Addressing Label Shift}
\label{sec:LS}
Inspired by the techniques in~\citet{lipton2018detecting, azizzadenesheli2018regularized}, we estimate the ratio between the source and target label distribution by aligning the node-level pseudo-label distribution to address LS. 
\begin{definition}
    Assume $\bP_\dS(Y_u = i) > 0, \forall i \in \dY$, we define $\mbeta \in \bR^{|\dY|\times 1}$ as the weights of the source and target label distribution:
        $[\mbeta]_{i} = \frac{\bP_\dT(Y = i)}{\bP_\dS(Y = i)}, \forall i\in \dY$.
\end{definition}

\begin{definition}
    Let $\mC \in \bR^{|\dY| \times |\dY|}$ denote the confusion matrix of the classifier for the source domain, and $\mmu\in \bR^{|\dY| \times 1}$ denote the distribution of the label predictions for the target domain, $\forall i, i' \in \dY$ 
    \begin{align*}
        &[\mC]_{i,i'} = \bP_\dS(\hY = i, Y = i'), \quad
        [\mmu]_{i} = \bP_\dT(\hY = i)
    \end{align*}
\end{definition}

The key insight is similar to the estimation of $\mw$, when $\bP_\dS(H^{(L)}|Y) = \bP_\dT(H^{(L)}|Y)$ is satisfied, $\mbeta$ can be estimated by solving a linear system $\mmu = \mC\mbeta$, 

\begin{restatable}{lemma}{betalemma}
    If $\bP_\dS(H^{(L)}|Y) = \bP_\dT(H^{(L)}|Y)$ is satisfied, and node representations are conditionally independent of each other given the node labels, then $\mmu = \mC\mbeta$. 
    \label{lem:optbeta}
\end{restatable}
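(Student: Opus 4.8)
The plan is to reuse the Black-Box Shift Estimation identity of \citet{lipton2018detecting} (cf.\ \citet[Lemma 3.2]{tachet2020domain}), transported to the node-prediction setting. The single structural fact I would invoke is that the predicted label $\hY_u$ of a node is a (possibly randomized) function of its last-layer representation $h_u^{(L)}$ \emph{alone} --- for instance $\hY_u = \argmax_{i}[g(h_u^{(L)})]_i$, or the soft rule $\bP(\hY_u = i \mid h_u^{(L)}) = [g(h_u^{(L)})]_i$ that underlies the empirical estimators $\hat\mC$ and $\hat\mmu$. This makes $\hY_u$ conditionally independent of $Y_u$ given $H_u^{(L)}$, with the conditional law $\bP(\hY \mid H^{(L)})$ the same in both domains.

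First I would show that the conditional confusion is domain-invariant: for every $i,i' \in \dY$,
\[
\bP_\dU(\hY = i \mid Y = i') = \int \bP(\hY = i \mid H^{(L)} = h)\,\mathrm{d}\bP_\dU(H^{(L)} = h \mid Y = i'),
\]
and since $\bP_\dS(H^{(L)} \mid Y) = \bP_\dT(H^{(L)} \mid Y)$ by hypothesis, the right-hand side does not depend on the domain; call its common value $q_{i,i'}$. Then I would expand the target prediction marginal and substitute, using $\bP_\dS(Y=i')>0$ (the standing assumption behind $\mbeta$) to multiply and divide:
\begin{align*}
[\mmu]_i = \bP_\dT(\hY = i) &= \sum_{i'\in\dY} q_{i,i'}\,\bP_\dT(Y = i') = \sum_{i'\in\dY} q_{i,i'}\,\bP_\dS(Y = i')\,\frac{\bP_\dT(Y = i')}{\bP_\dS(Y = i')}\\
&= \sum_{i'\in\dY} \bP_\dS(\hY = i, Y = i')\,[\mbeta]_{i'} = [\mC\mbeta]_i,
\end{align*}
and since $i$ is arbitrary this gives $\mmu = \mC\mbeta$.

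The hard part is not the algebra but the \emph{interpretation} of the two hypotheses in the GNN setting, where $H_u^{(L)}$ depends on node $u$'s entire rooted subtree and not on $Y_u$ alone. I would read $\bP_\dS(H^{(L)} \mid Y) = \bP_\dT(H^{(L)} \mid Y)$ as a statement about the law of $(H_u^{(L)}, Y_u)$ for a uniformly random node $u$ --- exactly what the $\mgamma$-reweighting of Section~\ref{sec:CSS} enforces via Theorem~\ref{thm:decompose} applied layer by layer --- and the assumption that node representations are conditionally independent given the node labels as what licenses treating $\{h_u^{(L)}\}_u$ as a sample in which each $h_u^{(L)}$ is drawn from $\bP_\dU(H^{(L)} \mid Y_u)$, so that the per-node averages defining $\hat\mC$ and $\hat\mmu$ estimate precisely the population $\mC$ and $\mmu$ above. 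With that correspondence pinned down the two steps close the proof; and the identical computation with edge-endpoint pair labels $(Y_u,Y_v)$ replacing the single label $Y$ yields $\mnu = \mSigma\mw$, i.e.\ Lemma~\ref{lem:optw}.
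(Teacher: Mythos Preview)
Your proposal is correct and follows essentially the same route as the paper: expand $\bP_\dT(\hY=i)$ via the law of total probability over $Y$, use the conditional-alignment hypothesis to replace $\bP_\dT(\hY\mid Y)$ by $\bP_\dS(\hY\mid Y)$, then multiply and divide by $\bP_\dS(Y=i')$ to expose $[\mC]_{i,i'}[\mbeta]_{i'}$. Your integral representation making step (a) explicit and your commentary on interpreting the hypotheses in the GNN setting are helpful elaborations, but the underlying argument is the same as the paper's.
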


Empirically, with $\hat{\mC}$ and $\hat{\mmu}$ can be estimated as 
\begin{gather*}
    [\hat{\mC}]_{i,i'} = \frac{1}{|\dV_\dS|}\sum_{u\in \dV_\dS, y_u = i'}[g(h_u^{(L)})]_i\\
    [\hat{\mmu}]_{i} = \frac{1}{|\dV_\dT|}\sum_{u' \in \dV_\dT}[g(h_{u'}^{(L)})]_i
\end{gather*}
\vspace{-4mm}

$\mbeta$ can be solved with a least square problem with the constraints to guarantee a valid target label distribution $\bP_\dT(Y)$. 
\begin{align}
    \min_{\mbeta} \lVert \hat{\mC} \mbeta  - \hat{\mmu} \rVert_2, \,
    \text{s.t.} \; \mbeta \geq 0,\, \sum_{i}[\mbeta]_i\bP_\dS(Y=i) = 1
\label{eq:optbeta_noreg}
\end{align}
We use $\mbeta$ to weight the classification loss to handle LS. Combined with the previous module that uses $\mgamma$ to solve for CSS, our algorithm completely addresses the structure shift.

\subsection{Algorithm Overview}

Now, we are able to put everything together. The entire algorithm is shown in Alg.~\ref{method:alg_table}. At the start of each epoch, the estimated $\mgamma$ are used as edge weights in the source graph (line 4). Then, GNN $\phi_{\mgamma}$ paired with $\mgamma$ yields node representations that further pass through the classifier $g$ to get soft labels $\hat{\mY}$ (line 5). The model is trained via the loss $\mathcal{L}_C^{\mbeta}$, i.e., a $\mbeta$-weighted cross-entropy loss (line 6): 
\begin{align}
    \mathcal{L}_C^{\mbeta} = \frac{1}{|\dV_\dS|}\sum_{v \in \dV_\dS}[\mbeta]_{y_v}\text{cross-entropy}(y_v, \hat{y}_v)
\label{eq:loss}
\end{align}
Then, with every $t$ epoch, update the estimations of $\mw$, $\mgamma$, and $\mbeta$ for the next epoch (lines 7-10). 

\begin{algorithm}[t]
\caption{Pairwise Alignment}
\label{method:alg_table}
\begin{algorithmic}[1]
\STATE \textbf{Input} The source graph $\mathcal{G}_\dS$ with node labels $\mY_\dS$; The target graph $\mathcal{G}_\dT$; A GNN $\phi$ and a classifier $g$; The total epoch number $n$, the epoch period $t$ for weight update. 
\STATE Initialize $\mw, \mgamma, \mbeta = \mathbf{1},$
\WHILE {epoch $< n$ \text{or} \,\text{not converged}} 
    \STATE Add edge weights to $\mathcal{G}_\dS$ according to $\mgamma$
    \STATE  Get $\hat{\mY}_\dS=g(\phi_{\mgamma}(\mx_\dS, \mA_\dS))$ in the source domain
    \STATE  Update $\phi$ and $g$ as $\min_{\phi, g}\mathcal{L}_C^{\mbeta}(\hat{\mY}_\dS,\mY_\dS)$ Eq.~\eqref{eq:loss}
    \IF {$  \text{epoch} \equiv 0\,(\text{mod}\,t)$} 
        \STATE Get $\hat{\mY}_\dS$ and $\hat{\mY}_\dT=g(\phi(\mx_\dT, \mA_\dT))$ 
        \STATE Update the estimation of $\hat{\mSigma}, \hat{\mnu}$, $\hat{\mC}, \hat{\mmu}$
        \STATE Optimize for $\mw$ Eq.\eqref{eq:optw_noreg} and calculate for $\mgamma$ Eq.\eqref{eq:calcgamma}
        \STATE Optimize for $\mbeta$ following Eq.\eqref{eq:optbeta_noreg}
    \ENDIF
\ENDWHILE
\end{algorithmic}
\end{algorithm}
\setlength{\textfloatsep}{10pt}

\subsection{Robust Estimation of $\mgamma,\mw,\mbeta$}
\label{sec:robust}
To improve robustness of the estimation, we incorporate L2 regularization into the least square optimization for $\mw$ and $\mbeta$. Typically, node classification tends to have imperfect accuracy and results in similar prediction probabilities across classes. This may lead to ill-conditioned $\hat{\mSigma}$ and $\hat{\mC}$ in Eq.\eqref{eq:optw_noreg} and \eqref{eq:optbeta_noreg}, respectively.  Specifically,  Eq.\eqref{eq:optw_noreg} and \eqref{eq:optbeta_noreg} can be revised as
\begin{gather}
    \min_{\mw} \lVert \hat{\mSigma} \mw  - \hat{\mnu} \rVert_2 + \lambda \lVert \mw - \mone \rVert_2, \\ 
    \quad \text{s.t.} \;\; \mw \geq 0,\;\sum_{i,j}[\mw]_{i,j}\bP_\dS(Y_u=i, Y_v=j | e_{uv} \in \dE_\dS) = 1\nonumber
    \label{eq:optw}
\end{gather}
\vspace{-4mm}
\begin{gather}
    \min_{\mbeta} \lVert \hat{\mC}\mbeta  - \hat{\mmu}\rVert_2 + \lambda \lVert \mbeta - \mone \rVert_2\\
    \text{s.t.} \;\; \mbeta \geq 0,\; \sum_{i}[\mbeta]_i\bP_\dS(Y=i) = 1.\nonumber
    \label{eq:optbeta}
\end{gather}
where the added L2 regularization will push estimated $\mw$ and $\mbeta$ close to $\mone$. In practice, we find this regularization to be important in the early training stage and can guide a better weight estimation in the later stage.

We also introduce a regularization strategy to improve the robustness of $\mgamma$. This is to deal with the variance in edge formation that may affect $\bP_\dU(Y_v|Y_u, v\in \dN_u)$ in $\mgamma$ calculation. 

Take a specific example to demonstrate the idea of regularizing $\gamma$. Suppose node labels are binary and suppose we count the numbers of edges of different types in the source graph and obtain  $\hat{\bP}_\dS(Y_u = 0, Y_v = 0| e_{uv} \in \dE_\dS) = 0.001$ and $\hat{\bP}_\dS(Y_u = 0, Y_v = 1| e_{uv} \in \dE_\dS) = 0.0005$. Then without any regularization, based on the estimated edge-type distributions, we obtain $\hat{\bP}_\dS(Y_v = 0|Y_u = 0, v\in \dN_u) = 2/3$ and $\hat{\bP}_\dS(Y_v = 0|Y_u = 0, v\in \dN_u) = 1/3$. 
However, the estimation $\hat{\bP}_\dS(Y_u=i, Y_v=j| e_{uv} \in \dE_\dS)$ may be inaccurate when its value is close to 0. Because in this case, the number of edges of the corresponding type $(i,j)$ is too small in the graph. These edges may be formed based on randomness. 
Conversely, larger observed values like $\hat{\bP}_\dS(Y_u = 0, Y_v = 0| e_{uv} \in \dE_\dS) = 0.2$ and $\hat{\bP}_\dS(Y_u = 0, Y_v = 1| e_{uv} \in \dE_\dS) = 0.1$ are often more reliable. 
To address the issue, we may introduce a regularization term $\delta$ when using $\mw$ to compute $\mgamma$. We compute  $\mw'=\frac{\hat{\bP}_\dT(Y_u = i, Y_v = j| e_{uv} \in \dE_\dS)+ \delta}{\hat{\bP}_\dS(Y_u = i, Y_v = j| e_{uv} \in \dE_\dS)+ \delta}= \frac{[\mw]_{ij}\hat{\bP}_\dS(Y_u = i, Y_v = j| e_{uv} \in \dE_\dS) + \delta}{\hat{\bP}_\dS(Y_u = i, Y_v = j| e_{uv} \in \dE_\dS)+ \delta}$, and replace $\mw$ with $\mw'$ when computing $\mgamma$.

\subsection{Comparison to StruRW~\cite{liu2023structural}}
\label{sec:compstrurw}
The edge weights estimation in StruRW and \proj differ in two major points. First, StruRW computes edge weights as the ratio of the source and target edge connection probabilities. This by definition, if using our notations, corresponds to $\mw$ instead of $\mgamma$ and ignores the effect of $\malpha$. However, Thm~\ref{thm:decompose} shows that using $\mgamma$ is the key to reduce CSS. 
Second, even for the estimation of $\mw$, StruRW suffers from inaccurate estimation. In our notation, StruRW simply assumes that $\bP_\dS(\hY = i|Y = i) = 1, \forall i \in \dY$, i.e., perfect training in the source domain and uses hard pseudo-labels in the target domain to estimate $\mw$. In contrast, our optimization to obtain $\mw$ is more stable. Moreover, StruRW ignores the effect of LS entirely. 
From this perspective, StruRW can be understood as a special case of \proj under the assumption of no LS and perfect prediction in the target graph.
Furthermore, our work is the first to rigorously formulate the idea of conditional alignment in graphs.   

\section{Experiments}
\label{sec:exp}

\begin{table*}[t]
\vspace{-3mm}
\caption{Performance on MAG datasets (accuracy scores). The \textbf{bold} font and \underline{underline} indicate the best model and baseline respectively}
\vspace{-4mm}
\begin{center}
\begin{adjustbox}{width = 1\textwidth}
\begin{small}
\begin{sc}
\begin{tabular}{lcccccccccc}
\toprule
 Domains  &   $US \rightarrow CN$ &  $US\rightarrow DE$  &     $US\rightarrow JP$    & $US\rightarrow RU$     & $US\rightarrow FR$ & $CN \rightarrow US$ &  $CN\rightarrow DE$  &     $CN\rightarrow JP$  & $CN\rightarrow RU$     & $CN\rightarrow FR$  \\
\midrule
ERM & $26.92\pm1.08$ &$26.37\pm1.16$ & $37.63\pm0.36$ & $21.71\pm0.38$ & $20.11\pm0.34$  & $31.47\pm1.25$ & $13.29\pm0.36$ & $22.15\pm0.89$ & $10.92\pm0.82$ & $10.86\pm1.04$\\
DANN  & $24.20\pm1.19$ &$26.29\pm1.44$ & $\underline{37.92}\pm0.25$ & $21.76\pm1.58$ & $20.71\pm0.29$  & $30.23\pm0.99$ & $13.46\pm0.40$ & $21.48\pm1.26$ & $11.94\pm1.90$ & $10.65\pm0.53$\\
IWDAN    & $23.39\pm0.93$ &$25.97\pm0.41$ & $34.98\pm0.68$ & $22.80\pm3.03$ & $21.75\pm0.81$  & $31.72\pm1.24$ & $13.39\pm1.06$ & $19.86\pm1.21$ & $10.93\pm1.33$ & $11.64\pm4.56$\\
UDAGCN  & OOM & OOM & OOM & OOM & OOM  & OOM & OOM & OOM & OOM & OOM\\
StruRW  & $\underline{31.58}\pm3.10$ &$\underline{30.03}\pm2.23$ & $37.20\pm0.27$ & $\underline{28.97}\pm2.98$ & $\underline{22.73}\pm1.73$  & $\underline{37.08}\pm1.09$ & $\underline{19.93}\pm1.82$ & $\underline{29.76}\pm2.56$ & $\underline{17.94}\pm9.82$ & $\underline{15.81}\pm3.76$\\
SpecReg  & $23.74\pm1.32$ &$26.68\pm1.44$ & $37.68\pm0.25$ & $21.47\pm0.84$ & $20.91\pm0.53$ & $26.52\pm1.75$ & $13.76\pm0.65$ & $20.50\pm0.08$ & $10.50\pm0.53$ & $10.45\pm1.16$\\
\midrule
\projew & $37.93\pm1.65$ & $38.49\pm2.66$ &$47.38\pm0.61$ & $35.07\pm10.2$  &  $\mathbf{28.64}\pm0.08$ & $43.28\pm0.16$ & $25.91\pm2.70$ & $37.42\pm5.64$ & $32.05\pm0.81$ & $22.83\pm2.46$\\
\projlw & $27.00\pm0.50$ & $26.89\pm0.90$ &$38.96\pm0.94$ & $21.42\pm0.91$  &  $20.63\pm0.45$ & $31.21\pm1.45$ & $15.02\pm1.04$ & $23.22\pm0.57$ & $11.44\pm0.57$ & $11.16\pm0.56$\\
\projb & $\mathbf{40.06}\pm 0.99$ & $\mathbf{38.85}\pm4.71$ & $\mathbf{47.43}\pm 1.82$ & $\mathbf{37.07}\pm5.28$ & $25.21\pm3.79$ & $\mathbf{45.16}\pm0.50$ & $\mathbf{26.19}\pm1.01$ & $\mathbf{38.26}\pm2.27$ & $\mathbf{33.34}\pm1.94$ & $\mathbf{24.16}\pm1.13$\\

\bottomrule
\label{table:MAG}
\end{tabular}
\end{sc}
\end{small}

\end{adjustbox}
\end{center}
\vspace{-7mm}
\end{table*}

\begin{table*}[t]
\vspace{-3mm}
\caption{Performance on Pileup datasets (f1 scores). The \textbf{bold} font and \underline{underline} indicate the best model and baseline respectively}
\vspace{-1mm}
\begin{center}
\begin{adjustbox}{width = 0.85\textwidth}
\begin{small}
\begin{sc}
\begin{tabular}{lcccccccc}
\toprule
 & \multicolumn{6}{c}{Pileup Conditions} & \multicolumn{2}{c}{Physical Processes} \\
 Domains  & $\text{PU}10 \rightarrow 30$    & $\text{PU}30 \rightarrow 10$ & PU$10\rightarrow 50$ & PU$50\rightarrow 10$ & PU$30\rightarrow 140$ & PU$140\rightarrow 30$ &$gg \rightarrow qq$      &     $qq \rightarrow gg$      \\
\midrule
ERM & $48.17\pm3.87$ &$64.17\pm1.50$ & $48.73\pm0.45$ & $70.11\pm1.12$ & $18.76\pm1.50$  & $33.02\pm28.77$ & $\underline{67.70}\pm0.31$ & $72.63\pm0.54$\\
DANN  & $49.99\pm2.07$ &$64.62\pm0.70$ & $48.44\pm0.78$ & $68.70\pm1.42$ & $28.20\pm1.20$  & $21.95\pm20.37$ & $66.48\pm0.67$ & $71.78\pm0.87$\\
IWDAN    & $35.85\pm1.73$ &$62.24\pm0.15$ & $26.49\pm0.40$ & $67.82\pm0.62$ & $8.91\pm3.17$  & $\underline{40.02}\pm1.93$ & $66.85\pm0.69$ & $\underline{73.10}\pm0.29$\\
UDAGCN  &$45.39\pm2.07$ &$62.27\pm1.23$ & $44.75\pm1.76$ & $68.93\pm0.55$ & $19.95\pm0.84$  & $29.66\pm5.57$ & $65.99\pm1.06$ & $71.99\pm0.61$\\
StruRW   & $52.41\pm1.74$ &$\underline{67.72}\pm0.22$ & $47.25\pm1.96$ & $\underline{70.93}\pm0.66$ & $\underline{37.81}\pm0.64$  & $37.84\pm2.82$ & $67.66\pm0.55$ & $72.72\pm0.68$\\
SpecReg  & $\underline{52.61}\pm1.06$ &$65.34\pm0.62$ & $\underline{48.85}\pm0.94$ & $67.95\pm2.23$ & $28.86\pm1.58$ & $28.79\pm25.83$ & $66.66\pm0.40$ & $72.73\pm0.42$\\
\midrule
  \projew & $\mathbf{56.00}\pm0.14$ & $58.44\pm3.19$ & $50.77\pm 0.70$ &$60.95\pm6.09$ & $40.31\pm0.31$  &  $37.24\pm7.69$ & $67.75\pm0.27$ & $73.24\pm0.38$\\
\projlw & $46.84\pm0.45$ & $67.12\pm0.65$ &$48.51\pm1.46$ & $71.17\pm0.70$  &  $36.29\pm0.92$ & $46.38\pm0.96$ & $67.63\pm0.38$ & $\mathbf{73.40}\pm0.13$\\
\projb & $55.45\pm 0.21$ & $\mathbf{68.29}\pm0.41$ & $\mathbf{51.43}\pm 0.42$ & $\mathbf{71.23}\pm0.63$ & $\mathbf{40.53}\pm0.25$& $\mathbf{51.21}\pm2.88$& $\mathbf{67.77}\pm0.70$ & $73.36\pm0.12$\\

\bottomrule
\label{table:hep}
\end{tabular}
\end{sc}
\end{small}

\end{adjustbox}
\end{center}
\vspace{-8mm}
\end{table*}

\begin{table*}[t]
\vspace{-4mm}
\caption{Synthetic CSBM results (accuracy). The \textbf{bold} font and the \underline{underline} indicate the best model and baseline respectively}
\vspace{-1mm}
\begin{center}
\begin{adjustbox}{width=0.89\textwidth}
\begin{small}
\begin{sc}
\begin{tabular}{lcccccccc}
\toprule
 & \multicolumn{2}{c}{CSS (only class ratio shift)} & \multicolumn{2}{c}{CSS (only degree shift)}  & \multicolumn{2}{c}{CSS (shift in both)} & \multicolumn{2}{c}{CSS + LS}\\

\midrule
ERM & $94.22\pm0.97$ & $57.04\pm3.83$ & $99.01\pm0.28$ & $96.21\pm0.27$ & $88.90\pm0.22$ & $58.01\pm1.91$ & $61.35\pm4.64$ & $61.65\pm0.80$\\
IWDAN    & $95.85\pm0.70$ & $76.75\pm1.32$ & $98.97\pm0.05$ & $\textbf{97.15}\pm0.33$ & $\underline{93.65}\pm0.70$ & $79.53\pm3.57$ & $\underline{92.42}\pm0.72$ & $\underline{87.01}\pm2.14$\\
UDAGCN   & $96.82\pm0.70$ & $69.93\pm5.17$ & $\textbf{99.52}\pm0.05$ & $\underline{97.04}\pm0.28$ & $93.17\pm1.02$ & $67.44\pm4.95$ & $87.67\pm3.21$ & $83.69\pm2.35$\\
StruRW   & $\underline{96.83}\pm0.33$ & $\underline{86.65}\pm5.62$ & $98.87\pm0.19$ & $95.93\pm0.55$ & $92.09\pm0.55$ & $\underline{80.00}\pm7.49$ & $75.38\pm12.11$ & $75.96\pm2.96$\\
SpecReg   & $93.46\pm1.21$ & $62.97\pm1.01$ & $98.94\pm0.03$ & $96.69\pm0.23$ & $89.58\pm1.58$ & $61.28\pm1.19$ & $76.73\pm3.18$ & $83.40\pm1.38$\\
\midrule
\projew  & $96.65\pm1.21$ & $91.79\pm1.68$ & $98.92\pm0.52$ & $96.24\pm0.23$ & $94.99\pm0.49$ & $91.20\pm0.95$ & $94.95\pm0.69$ & $\mathbf{95.66}\pm0.45$\\
\projlw & $94.22\pm0.95$ & $57.14\pm3.73$ &$\underline{99.02}\pm0.29$ & $96.17\pm0.26$  &  $88.85\pm0.22$ & $57.96\pm1.84$ & $61.39\pm4.59$ & $67.91\pm9.98$\\
\projb    & $\mathbf{97.24}\pm0.33$ & $\mathbf{91.97}\pm1.49$ & $98.20\pm1.04$ & $96.25\pm0.33$ & $\mathbf{95.44}\pm0.51$ & $\mathbf{91.67}\pm0.38$ & $\mathbf{95.24}\pm0.11$ & $95.55\pm0.65$\\
\bottomrule
\label{table:CSBM}
\end{tabular}
\end{sc}
\end{small}
\end{adjustbox}
\end{center}
\vspace{-7mm}
\end{table*}

\begin{table*}[t]
\vspace{-2.5mm}
\caption{Performance on Arxiv and DBLP/ACM datasets (accuracy). The \textbf{bold} and \underline{underline} indicate the best model and baseline}
\vspace{-1mm}
\begin{center}
\begin{adjustbox}{width = 0.85\textwidth}
\begin{small}
\begin{sc}
\begin{tabular}{lcccccccc}
\toprule
 & \multicolumn{2}{c}{1950-2007} & \multicolumn{2}{c}{1950-2009} & \multicolumn{2}{c}{1950-2011} & \multicolumn{2}{c}{DBLP and ACM}\\
 Domains  & $2014-2016$    & $2016-2018$ & $2014-2016$ & $2016-2018$ & $2014-2016$ & $2016-2018$  & $A\rightarrow D$ & $D\rightarrow A$\\
\midrule
ERM & $37.91\pm0.31$ &$35.22\pm0.71$ & $43.50\pm0.35$ & $40.19\pm3.62$ & $51.76\pm0.93$  & $52.56\pm1.06$ & $57.26\pm1.90$  & $47.77\pm6.61$\\
DANN  & $37.31\pm1.54$ &$36.84\pm1.40$ & $\underline{43.57}\pm0.47$ & $42.04\pm2.70$ & $53.02\pm0.67$  & $52.69\pm1.26$ & $65.34\pm5.91$  & $54.36\pm6.20$\\
IWDAN  & $36.16\pm2.91$ &$25.48\pm9.77$ & $41.26\pm2.08$ & $35.91\pm4.28$ & $46.73\pm0.62$  & $42.70\pm3.21$ & $\underline{66.96}\pm7.38$  & $56.13\pm6.48$\\
UDAGCN  &$38.10\pm1.62$ &OOM & $42.85\pm2.09$ & OOM & $53.13\pm0.31$  & OOM & $57.05\pm5.43$  & $\underline{58.42}\pm6.65$\\
StruRW  & $\underline{38.56}\pm0.77$ &$\underline{37.17}\pm2.75$ & $43.55\pm2.37$ & $\underline{43.55}\pm2.37$ & $\underline{53.19}\pm0.45$  & $\mathbf{53.64}\pm0.65$ & $60.03\pm2.18$  & $52.13\pm1.25$\\
SpecReg & $37.09\pm0.62$ &$33.46\pm0.83$ & $43.14\pm2.16$ & $43.06\pm1.09$ & $52.63\pm1.29$ & $52.46\pm0.83$ & $31.03\pm2.45$  & $53.04\pm2.21$\\
\midrule
\projew & $39.75\pm0.96$ &$40.54\pm2.44$ & $44.04\pm0.83$ & $44.32\pm1.61$ & $\mathbf{53.75}\pm0.48$ & $51.10\pm1.30$  & $65.20\pm3.69$  & $60.60\pm3.86$\\
\projlw & $39.47\pm0.88$ & $\mathbf{41.14}\pm2.07$ &$43.40\pm1.97$ & $43.44\pm1.65$  &  $52.48\pm0.53$ & $\underline{52.83}\pm0.98$ & $\mathbf{72.41}\pm1.29$ & $61.40\pm1.92$\\
\projb & $\mathbf{39.98}\pm0.77$ &$40.23\pm0.30$ & $\mathbf{44.60}\pm0.42$ & $\mathbf{44.43} \pm0.34$ & $53.56\pm0.98$  & $51.60\pm0.24$ & $70.97\pm3.87$  & $\mathbf{63.36}\pm2.90$\\

\bottomrule
\label{table:citation}
\end{tabular}
\end{sc}
\end{small}
\end{adjustbox}
\end{center}
\vspace{-7mm}
\end{table*}

We evaluate three variants of \proj to understand how its different components deal with the distribution shift on synthetic datasets and 5 real-world datasets. These variants include \projew with only $\mgamma$ as source graph edge weights to address CSS, \projlw with only $\mbeta$ as label weights to address LS, and  \projb that combines both. 
We next briefly introduce datasets and settings while leaving more details in Appendix~\ref{app:exp}.

\subsection{Datasets and Experimental Settings} 
\textbf{Synthetic Data.}
CSBMs (see the definition in Appendix~\ref{app:CSBM}) are used to generate the source and target graphs with three node classes. We explore four scenarios in structure shift without feature shift, where the first three explore CSS with shifts in the conditional neighboring node's label distribution (class ratio), shifts in the conditional node's degree distribution (degree), and shifts in both. Considering these three types of shift is inspired by the argument in Thm~\ref{thm:decompose}. The fourth setting examines CSS and LS jointly. 
In addition, we consider two degrees of shift under each scenario with the left column being the smaller shift as shown in Table~\ref{table:CSBM}. 
The detailed configurations of the CSBM regarding edge probabilities and node features are in Appendix~\ref{app:expset}. 

\textbf{MAG} We extract paper nodes and their citation links from the original MAG~\cite{hu2020open, wang2020microsoft}. Papers are split into separate graphs based on their countries of publication determined by their corresponding authors. The task is to classify the publication venue of the papers. Our experiments study generation across the top 6 countries with the most number of papers (in total 377k nodes, 1.35M edges). We train models on the graphs from US/China and test them on the graphs from the rest countries.

\textbf{Pileup Mitigation}~\cite{liu2023structural} is a dataset of a denoising task in HEP named pileup mitigation~\cite{bertolini2014pileup}. Proton-proton collisions produce particles with leading collisions (LC) and nearby bunch crossings as other collisions (OC). The task is to identify whether a particle is from LC or OC. Nodes are particles and particles are connected if they are close in the $\eta$-$\phi$ space. We study two distribution shifts: the shift of pile-up (PU) conditions (mostly structure shift), where PU$k$ indicates the averaged number of other collisions in the beam is $k$, and the shift in the data generating process (primarily feature shift).

\textbf{Arxiv}~\cite{hu2020open} is a citation network of Arxiv papers to classify papers' subject areas. We study the shift in time by using papers published in earlier periods to train and test on papers published later. Specifically, we traine on papers published from 1950 to 2007/ 2009/ 2011 and test on paper published between 2014 to 2016 and 2016 to 2018.

\textbf{DBLP and ACM}~\cite{tang2008arnetminer, wu2020unsupervised} are two paper citation networks obtained from DBLP and ACM. Nodes are papers and edges represent citations between papers. The goal is to predict the research topic of a paper. We train the GNN on one network and test it on the other.

\textbf{Baselines} DANN~\cite{ganin2016domain} and IWDAN~\cite{tachet2020domain} are non-graph methods, we adapt them to the graph setting with GNN as the encoder. UDAGCN~\cite{wu2020unsupervised}, StruRW~\cite{liu2023structural} and SpecReg~\cite{you2023graph} are chosen as GDA baselines. We use GraphSAGE~\cite{hamilton2017inductive} as backbones and the same model architecture for all baselines. 

\textbf{Evaluation and Metric}
The source graph is used for training, 20 percent of the node labels in the target graph are used for validation and the rest 80 percent are held out for testing. We select the best model based on the target validation scores and report its scores on the target testing nodes in tables. We use accuracy for MAG, Arxiv, DBLP, ACM, and synthetic datasets. For the MAG datasets, we evaluate the top 19 classes as we group the remaining classes as a dummy class. The Pileup dataset uses the binary f1 score. 

\textbf{Hyperparameter Study}
Our hyperparameter tuning is mainly for the robustness estimation for $\mgamma$ and $\mbeta$ detailed in section~\ref{sec:robust}. We will discuss them in Appendix~\ref{app:hyperparameter}.

\subsection{Result Analysis}
In the MAG dataset, \proj methods markedly outperform baselines, as detailed in Table~\ref{table:MAG}. Most baselines generally match the performance of ERM suggesting their limited effectiveness in addressing CSS and LS. StruRW, however, stands out, emphasizing the need for CSS mitigation in MAG. When compared to StruRW, \proj not only demonstrates superior handling of CSS but also offers advantages in LS mitigation, resulting in over $25\%$ relative improvements. Also, IWDAN has not shown improvements due to the suboptimality of performing only conditional feature alignment yet ignoring the structure, 
highlighting the importance of tailored solutions for GDA like \proj. 

HEP results are in Table~\ref{table:hep}. 
Considering the shift in pileup (PU) conditions, baselines with graph structure regularization, like StruRW and SpecReg, achieve better performance. This matches our expectations that PU condition shifts introduce mostly structure shifts as shown in Fig~\ref{fig:shift} and our methods further significantly outperform these baselines in addressing such shifts. Specifically, we observe \projew excels in transitioning from low PU to high PU, while \projlw is more effective in the opposite direction. This difference stems from the varying dominant impacts of LS and CSS. High PU datasets have more imbalanced label distribution with a large OC: LC ratio, 
where LS induces more negative effects over CSS, necessitating the LS mitigation. Conversely, the cases from low PU to high PU, mainly influenced by CSS, can be addressed better by \projew. Regarding shifts in physical processes, \proj methods still rank the best, but all models have close performance since structure shift now becomes minor 
as shown in Table~\ref{table:hepstats}.

The synthetic dataset results in Table~\ref{table:CSBM} well justify our theory. We observe minimal performance decay with ERM in scenarios with only degree shifts, indicating that node degree impacts are minimal under mean pooling in GNNs. Additionally, while CSS with both shifts results in lower ERM performance compared to shift only in class ratio, our \proj method achieves similar performance, highlighting the adequacy of focusing on shifts in the conditional neighborhood node label distribution for CSS. \proj notably outperforms baselines in CSS scenarios, especially where class ratio shifts are more pronounced (as in the second case of each scenario). With joint shifts in CSS and LS, \proj methods perform the best and IWDAN is the best baseline as it is designed to address conditional shifts and LS in non-graph tasks.

For the Arxiv and DBLP/ACM datasets in Table~\ref{table:citation}, the \proj methods demonstrate reasonable improvements over baselines. Regarding the Arxiv dataset, \proj is particularly effective when the training on pre-2007 papers, which possess larger shifts as shown in Table~\ref{table:realstats}. Also, all baselines perform similarly with no significant gap between the GDA methods and the non-graph methods, suggesting that addressing structure shift has limited benefits in this dataset. Likewise, regarding the DBLP and ACM datasets, we observe the performance gain of methods that align marginal node feature distribution, like DANN and UDAGCN, indicating this dataset contains mostly feature shifts. While in the cases where LS is large ($A\rightarrow D$ or Arxiv training on pre-2007, testing on 2016-2018 as shown in Table~\ref{table:realstats}), \projlw achieves the best performance. 

\vspace{1mm}
\textbf{Ablation Study}

Among the three variants of \proj, \projb performs the best in most cases. \projew contributes more compared to \projlw when CSS dominates (MAG datasets, Arxiv, and HEP from low PU to high PU). \projlw alone offers slight improvements except with highly imbalanced training labels (from high PU to low PU in HEP datasets). But when combined with \projew, it will yield additional benefits.

\section{Conclusion}
This work studies the distribution shifts in graph-structured data. 
We analyze distribution shifts in real-world graph data and decompose structure shifts into two components: conditional structure shift (CSS) and label shift (LS). 
Our novel approach, Pairwise Alignment (\proj), well tackles both CSS and LS in both theory and practice. Importantly, this work also curates a new, by far the largest dataset MAG which reflects the actual need for region-based generalization of graph learning models. We believe this large dataset can incentivize more in-depth studies on GDA.  
\section*{Impact Statement}
This paper presents work whose goal is to advance the field of Machine Learning. There are many potential societal consequences of our work, none which we feel must be specifically highlighted here.

\section*{Acknowledgement}
We greatly thank Yongbin Feng for discussing relevant HEP applications and Mufei Li for discussing relevant MAG dataset curation. S. Liu, D. Zou, and P. Li are partially supported by NSF award PHY-2117997 and IIS-2239565. The work of HZ was supported in part by the Defense Advanced Research Projects Agency (DARPA) under Cooperative Agreement Number: HR00112320012 and a research grant from the IBM-Illinois Discovery Accelerator Institute (IIDAI).

\newpage
\bibliography{reference}
\bibliographystyle{icml2024}


\newpage
\appendix
\onecolumn
\section{Some Definitions}
\label{app:CSBM}
\begin{definition}[Contextual Stochastic Block Model]~\cite{deshpande2018contextual} 
    \label{def:CSBM}

    The Contextual Stochastic Block Model (CSBM) is a framework combining the stochastic block model with node features for random graph generation. A CSBM with nodes belonging to $k$ classes is defined by parameters $(n, \mB, \mathbb{P}_0, \dots, \mathbb{P}_{k-1})$, where $n$ represents the total number of nodes. The matrix $\mB$, a $k \times k$ matrix, denotes the edge connection probability between nodes of different classes. Each $\mathbb{P}i$ (for $0 \leq i < k$) characterizes the feature distribution of nodes from class $i$. In a graph generated from CSBM, the probability that an edge exists between a node $u$ from class $i$ and a node $v$ from class $j$ is specified by $B_{ij}$, an element of $\mB$. For undirected graphs, $\mB$ is symmetric, i.e., $\mB=\mB^\top$. In CSBM, node features and edges are generated independently, conditioned on node labels.
\end{definition}

\section{Omitted Proofs}

\subsection{Proof for Theorem \ref{thm:decompose}}
\mythm*

\begin{proof}
     We analyze the distribution $\bP_\dU(H^{(k+1)}|Y)$ to see which distributions should be aligned to achieve $\bP_\dS(H^{(k+1)}|Y) = \bP_\dT(H^{(k+1)}|Y)$. Since $h_u^{(k+1)} = \text{UPT}\,(h_u^{(k)}, \text{AGG}\,(\ldbb h_v^{(k)}:v \in \dN_u\rdbb))$, $\bP_\dU(H^{(k+1)}|Y)$ can be expanded as follows:
    \begin{align}
        \notag&\bP_\dU(h_u^{(k)}, \ldbb h_v^{(k)}:v \in \dN_u\rdbb | Y_u=i)\\
        \notag&\stackrel{(a)}{=} \bP_\dU(h_u^{(k)}|Y_u = i)\bP_\dU(\ldbb h_v^{(k)}:v \in \dN_u\rdbb | Y_u=i)\\
        \notag& \stackrel{}{=} \bP_\dU(h_u^{(k)}|Y_u = i)\bP_\dU(|\dN_u| = d | Y_u=i)\bP_\dU(\ldbb h_v^{(k)}\rdbb | Y_u=i, v \in \dN_u, |\dN_u| = d)\\
        \notag& \stackrel{}{=} \bP_\dU(h_u^{(k)}|Y_u = i)\bP_\dU(|\dN_u| = d | Y_u=i)\bP_\dU(\ldbb h_{v_1}^{(k)}, \cdots h_{v_d}^{(k)}\rdbb | Y_u=i, v_t \in \dN_u, \text{for} \;t\in [1,d])\\
        \notag& \stackrel{(b)}{=} \bP_\dU(h_u^{(k)}|Y_u = i)\bP_\dU(|\dN_u| = d | Y_u=i)(d\,!)\prod_{t = 1}^d\bP_\dU(h_{v_t}^{(k)}| h_{v_{1:t-1}}^{(k)}, Y_u=i, v_t \in \dN_u)\\
        \notag& \stackrel{}{=} \bP_\dU(h_u^{(k)}|Y_u = i)\bP_\dU(|\dN_u| = d | Y_u=i)(d\,!)\prod_{t = 1}^d(\sum_{j\in \dY}\bP_\dU(h_{v_t}^{(k)}| Y_{v_t} = j, h_{v_{1:t-1}}^{(k)}, Y_u=i, v_t \in \dN_u)\\
        \notag &\quad \bP_\dU(Y_{v_t} = j|h_{v_{1:t-1}}^{(k)}, Y_u=i, v_t \in \dN_u))\\
         \label{eq:decompose} & \stackrel{(c)}{=} \bP_\dU(h_u^{(k)}|Y_u = i)\bP_\dU(|\dN_u| = d | Y_u=i)(d\,!)\prod_{t = 1}^d(\sum_{j\in \dY}\bP_\dU(h_{v_t}^{(k)}| Y_{v_t} = j)\bP_\dU(Y_{v_t} = j|Y_u=i, v_t \in \dN_u))
    \end{align}

    
    (a) is based on the assumption that node attributes and edges are conditionally independent of others given the node labels.  
    (b), here we suppose that the observed messages $h_{v}$ are different $\forall v \in \dN_u $, and this assumption does not affect the result of the theorem. If some of them are identical, we modify the coefficient $d!$ as $\frac{d!}{\Pi_{t=1}^dm_i!}$, where $m_t$ denotes the repeated messages. For simplicity, we assume that $m_t = 1, \forall t\in [1, d]$. (c) is based on the assumption that given $Y=y_u$, $h_u^{(k)}$ is independently sampled from $\bP_\dU(H^{(k)}|Y)$

    With the goal to achieve $\bP_\dS(H^{(k+1)}|Y) = \bP_\dT(H^{(k+1)}|Y)$, it suffices to achieve by making the input distribution equal across the source and the target $$\bP_\dS(h_u^{(k)}, \ldbb h_u^{(k)}:v \in \dN_u\rdbb | Y_u=i) = \bP_\dT(h_u^{(k)}, \ldbb h_u^{(k)}:v \in \dN_u\rdbb | Y_u=i)$$
    since the source and target graphs undergo the same set of functions. Based on Eq.~\eqref{eq:decompose},this means it suffices to let  $\bP_\dS(h_u^{(k)}|Y_u = i) = \bP_\dT(h_u^{(k)}|Y_u = i)$ and $\bP_\dS(h_{v_t}^{(k)}| Y_{v_t} = j) = \bP_\dT(h_{v_t}^{(k)}| Y_{v_t} = j)$ since $\bP_\dS(H^{(k)}|Y) = \bP_\dT(H^{(k)}|Y)$ is assumed to be true. Therefore, as long as there exists a transformation that modifies the $\dN_u \rightarrow \Tilde{\dN}_u$ such that $$\bP_\dS(|\Tilde{\dN}_u| = d | Y_u=i) = \bP_\dT(|\dN_u| = d | Y_u=i) ;\quad \bP_\dS(Y_v = j|Y_u=i, v \in \Tilde{\dN}_u)) = \bP_\dT(Y_v = j|Y_u=i, v \in \dN_u))$$
    Then, $\bP_\dS(H^{(k+1)}|Y) = \bP_\dT(H^{(k+1)}|Y)$
\end{proof}

\begin{remark}
   Iteratively, we can achieve $\bP_\dS(H^{(L)}|Y) = \bP_\dT(H^{(L)}|Y)$ given no feature shift initially $\bP_\dS(X|Y) = \bP_\dT(X|Y)$ as $\bP_\dS(H^{(1)}| Y) = \bP_\dT(H^{(1)}| Y)$
    \begin{gather*}
        \text{base case: } \bP_\dS(H^{(1)}| Y) = \bP_\dT(H^{(1)}| Y) \Rightarrow \bP_\dS(H^{(2)}| Y) = \bP_\dT(H^{(2)}| Y)\\
        \text{inductive step: } \bP_\dS(H^{(k)}| Y) = \bP_\dT(H^{(k)}| Y),  \stackrel{(d)}{\Rightarrow} \bP_\dS(H^{(k+1)}| Y) = \bP_\dT(H^{(k+1)}| Y)\\
        \text{Therefore, }\bP_\dS(H^{(L)}| Y) = \bP_\dT(H^{(L)}| Y).
    \end{gather*}
    (d) is proved above that when using a multiset transformation to align two distributions, this can be guaranteed 

    Under the assumption that given $Y=y_u$, $h_u^{(k)}$ is independently sampled from $\bP_\dU(H^{(k)}|Y)$, $\bP_\dS(H^{(L)}|Y) = \bP_\dT(H^{(L)}|Y)$ can induce $\bP_\dS(\mH^{(L)}|\mY) = \bP_\dT(\mH^{(L)}|\mY)$ since $\bP(\mH^{(L)} = \mh^{(L)}|\mY = \my) = \Pi_{u\in \dV}\bP(H^{(L)} = h_u|Y = y_u)$
\end{remark}

\subsection{Proof for Lemma \ref{lem:optw}}
\wlemma*
\begin{proof}
    \begin{align*}
        \bP_\dT(\hY_u = i, \hY_v = j|A_{uv} = 1) &= \sum_{i',j' \in \dY}\bP_\dT(\hY_u = i, \hY_v = j|Y_u = i', Y_v = j', A_{uv} = 1)\bP_\dT(Y_u = i', Y_v = j'|A_{uv} = 1)\\
        &\stackrel{(a)}{=} \sum_{i',j' \in \dY}\bP_\dT(\hY_u = i|Y_u = i')\bP_\dT(\hY_v = j|Y_v = j')\bP_\dT(Y_u = i', Y_v = j'|A_{uv} = 1)\\
        &\stackrel{(b)}{=} \sum_{i',j' \in \dY}\bP_\dS(\hY_u = i|Y_u = i')\bP_\dS(\hY_v = j|Y_v = j')\bP_\dT(Y_u = i', Y_v = j'|A_{uv} = 1)\\
        &= \sum_{i',j' \in \dY}\bP_\dS(\hY_u = i, \hY_v = j|Y_u = i', Y_v = j', A_{uv} = 1)\bP_\dT(Y_u = i', Y_v = j'|A_{uv} = 1)\\
        &= \sum_{i',j' \in \dY}\bP_\dS(\hY_u = i, \hY_v = j, Y_u = i', Y_v = j'| A_{uv} = 1)\frac{\bP_\dT(Y_u = i', Y_v = j'|A_{uv} = 1)}{\bP_\dS(Y_u = i', Y_v = j'|A_{uv} = 1)}\\
        &= \sum_{i', j' \in \dY}[\mSigma]_{ij, i'j'}[\mw]_{i'j'}
    \end{align*}
    (a) is because $\hy_u = g(h_u^{(L)})$ and the assumption that node representations and graph structures are conditionally independent of others given the node labels. And (b) is achieved since $\bP_\dS(H^{(L)}|Y) = \bP_\dT(H^{(L)}|Y)$ is satisfied, such that $\bP_\dS(g(h_u^{(L)}) = i|Y_u = i') = \bP_\dT(g(h_u^{(L)}) = i|Y_u = i'), \forall i' \in \dY$
\end{proof}

\subsection{Proof for Lemma \ref{lem:optbeta}}
\betalemma*

\begin{proof}
    \begin{align*}
        \bP_\dT(\hY_u = i) &= \sum_{i'\in \dY}\bP_\dT(\hY_u = i|Y_u = i')\bP_\dT(Y_u = i')\\
        &\stackrel{(a)}{=} \sum_{i'\in \dY}\bP_\dS(\hY_u = i|Y_u = i')\bP_\dT(Y_u = i')\\
        &= \sum_{i'\in \dY}\bP_\dS(\hY_u = i, Y_u = i')\frac{\bP_\dT(Y_u = i')}{\bP_\dS(Y_u = i')}\\
        &= \sum_{i'\in \dY}[\mC]_{i, i'}[\mbeta]_{i'}
    \end{align*}
    (a) is because, when $\bP_\dS(H^{(k+1)}|Y) = \bP_\dT(H^{(k+1)}|Y)$ is satisfied, $\bP_\dS(g(h_u^{(L)}) = i|Y_u = i') = \bP_\dT(g(h_u^{(L)}) = i|Y_u = i'), \forall i' \in \dY$
\end{proof}

\section{Algorithm Details}

\subsection{Details in optimization for $\mgamma$}
\label{app:calcw}
\subsubsection{Empirical estimation of $\mSigma$ and $\mnu$ in matrix form}

For the least square problem that solves for $\mw$
\begin{align*}
    \mSigma \mw = \mnu
\end{align*}
where $\mSigma \in \mathbb{R}^{|\dY|^2 \times |\dY|^2}$, $\mw \in \mathbb{R} ^ {|\dY|^2 \times 1}$, $\mnu \in \mathbb{R} ^ {|\dY|^2 \times 1}$ 

Empirically, we estimate the value of $\hat{\mSigma}$ and $\hat{\mnu}$ as following:
\begin{align*}
    \hat{\mSigma} = \frac{1}{|\dE_\dS|}\mE^\dS \mM^\dS
\end{align*}
$\mE^\dS \in \mathbb{R}^{|\dY|^2 \times |\dE_\dS|}$, where each column represents the joint distribution of the classes prediction associated with the starting and ending node of each edge in the source graph. $[\mE^\dS]_{:, {uv}} = g(h_u^{(L)}) \otimes g(h_v^{(L)}), \forall \text{edge}\;{uv} \in \dE_\dS$. And each entry $[\mE^\dS]_{{ij}, {uv}} = [g(h_u^{(L)})]_i \times [g(h_v^{(L)})]_j, \forall i,j \in \dY$.
$\mM^\dS \in \mathbb{R}^{|\dE_\dS| \times |\dY|^2}$ encodes the ground truth of the starting and ending node of an edge, as $[\mM^\dS]_{uv, y_uy_v} = 1$ for each edge $uv \in \dE_\dS$.

\begin{align*}
    \hat{\mnu} = \frac{1}{|\dE_\dT|}\mE^\dT \mone
\end{align*}
Similarly, $\mE^\dT \in \mathbb{R}^{|\dY|^2 \times |\dE_\dT|}$, where each column represents the joint distribution of the classes prediction associated with the starting and ending node of each edge in the target graph. $[\mE^\dT]_{:, {uv}} = g(h_u^{(L)}) \otimes g(h_v^{(L)}), \forall \text{edge}\;{uv} \in \dE_\dT$. And each entry $[\mE^\dT]_{{ij}, {uv}} = [g(h_u^{(L)})]_i \times [g(h_v^{(L)})]_j, \forall i,j \in \dY$. $\mone \in \mathbb{R}^{|\dE_\dT| \times 1}$ is the all one vector.

\subsubsection{Calculate for $\malpha$ in matrix form}

To finally solve for the ratio weight $\mgamma$, we need the value $\malpha$. 
\begin{align*}
    \malpha_i = \frac{\bP_\dT(y_u=i|A_{uv}=1)}{\bP_\dS(y_u=i|A_{uv}=1)} &= \frac{\sum_j\bP_\dT(y_u=i, y_v=j|A_{uv}=1)}{\sum_j\bP_\dS(y_u=i, y_v=j|A_{uv}=1)}\\
    &= \frac{\sum_j\frac{\bP_\dT(y_u = i, y_v = j|A_{uv} = 1)}{\bP_\dS(y_u = i, y_v = j|A_{uv} = 1)}\bP_\dS(y_u=i, y_v=j|A_{uv}=1)}{\sum_j\bP_\dS(y_u=i, y_v=j|A_{uv}=1)}\\
    &= \frac{\sum_j\frac{\bP_\dT(y_u = i, y_v = j|A_{uv} = 1)}{\bP_\dS(y_u = i, y_v = j|A_{uv} = 1)}\bP_\dS(y_u=i, y_v=j|A_{uv}=1)}{\bP_\dS(y_u=i|A_{uv}=1)}
\end{align*}
In matrix form, we construct $\mK \in \mathbb{R}^{|\dY| \times |\dY|^2}$, where $[K]_{i, {ij}} = \frac{\bP_\dS(y_u=i, y_v=j|A_{uv}=1)}{\bP_\dS(y_u=i|A_{uv}=1)}, \forall i, j \in |\dY|$. Note that $[K]_{i, {i'j}} = 0$ for $i' \neq i, \forall j \in |\dY|$. 
\begin{align*}
    \malpha = \mK\mw
\end{align*}

\subsection{Details in optimization for $\mbeta$}
\label{app:calcbeta}
For the least square problem that solves for $\mbeta$
\begin{align*}
    \mC \mbeta = \mmu
\end{align*}
where $\mC \in \mathbb{R}^{|\dY| \times |\dY|}$, $\mbeta \in \mathbb{R} ^ {|\dY| \times 1}$, $\mmu \in \mathbb{R} ^ {|\dY| \times 1}$ 

Empirically, we estimate the value of $\hat{\mC}$ and $\hat{\mmu}$ in matrix form as following:
\begin{align*}
    \hat{\mC} = \frac{1}{|\dV_\dS|}\mD^\dS \mL^\dS
\end{align*}
$\mD^\dS \in \mathbb{R}^{|\dY| \times |\dV_\dS|}$, where each column represents the distribution of the class prediction of each node in the source graph. $[\mD^\dS]_{:, u} = g(h_u^{(L)}), \forall u \in \dV_\dS$. And each entry $[\mD^\dS]_{{i}, {u}} = [g(h_u^{(L)})]_i, \forall i \in \dY$.
$\mL^\dS \in \mathbb{R}^{|\dV_\dS| \times |\dY|}$ that encodes the ground truth class of each node, as $[\mL^\dS]_{u, y_u} = 1$ for each node $u \in \dV_\dS$.

\begin{align*}
    \hat{\mmu} = \frac{1}{|\dV_\dT|}\mD^\dT \mone
\end{align*}
Similarly, $\mD^\dT \in \mathbb{R}^{|\dY| \times |\dV_\dT|}$, where each column represents the distribution of the class prediction of each node in the target graph. $[\mD^\dT]_{:, u} = g(h_u^{(L)}), \forall u \in \dV_\dT$. And each entry $[\mD^\dT]_{{i}, {u}} = [g(h_u^{(L)})]_i, \forall i \in \dY$. $\mone \in \mathbb{R}^{|\dV_\dT| \times 1}$ is the all one vector.

\section{More Related Works}
\label{app:morerelated}
\textbf{Other node-level DA works}  Other domain invariant learning-based methods, like \citet{shen2020network} proposed to align the class-conditioned representations with conditional MMD distance by using pseudo-label predictions for the target domain,   \citet{zhang2021adversarial} aimed to use separate networks to capture the domain-specific features in addition to a shared encoder for adversarial training and further \citet{pang2023sa} transformed the node features into spectral domain through Fourier transform for alignment. Other approaches like \citet{cai2021graph} disentangled semantic, domain, and noise variables and used semantic variables that are better aligned with target graphs for prediction. \citet{liu2024rethinking} explored the role of GNN propagation layers and linear transformation layers, thus proposing to use a shared transformation layer with more propagation layers on the target graph instead of a shared encoder.

\textbf{Node-level OOD works} In addition to GDA, many works target the out-of-distribution (OOD) generalization without access to unlabeled target data. For the node classification task, EERM~\cite{wu2022handling} and  LoRe-CIA~\cite{anonymous2023improved} both extended the idea of invariant learning to node-level tasks, where EERM minimized the variance over representations across different environments and LoRe-CIA enforced the cross-environment Intra-class Alignment of node representations to remove their reliance on spurious features. \citet{wang2021mixup} extended mixup to the node representation under node and graph classification tasks.

\textbf{Graph-level DA and OOD works} The shifts and methods in graph-level problems are significantly different from those for node-level tasks. The shifts in graph-level tasks can be modeled as IID by considering individual graphs and often satisfy the covariate shift assumption, which makes some previous IID works applicable. Under the availability of target graphs, there are several graph-level GDA works like~\cite{yin2023coco, yin2022deal}, where the former utilized contrastive learning to align the graph representations with similar semantics and the latter employed graph augmentation to match the target graphs under adversarial training. Regarding the scenarios in which we do not have access to the target graphs, it becomes the graph OOD problem. A dominant line of work in graph-level OOD is based on invariant learning originating from causality to identify a subgraph that remains invariant across graphs under distribution shifts. Among these works, \citet{wu2021discovering, chen2022learning, li2022learning, yang2022learning, chen2023does, gui2023joint, fan2022debiasing, fan2023generalizing} aimed to find the invariant subgraph, and \citet{miao2022interpretable, yu2020graph} used graph information bottleneck. Furthermore, another line of works adopted graph augmentation strategies, like~\cite{sui2023unleashing, jin2022empowering} and some mixup-based methods~\cite{han2022g, ling2023graph, jia2023graph}. Moreover, some works focused on handling the size shift~\cite{yehudai2021local, bevilacqua2021size, chuang2022tree}.

\section{Experiments details}
\label{app:exp}
\subsection{Dataset Details}
\label{app:dataset}

\textbf{Dataset Statistics} Here we report the number of nodes, number of edges, feature dimension, and the number of labels for each dataset. The Arxiv-year means the graph with papers till that year. The edges are all undirected edges, which are counted twice in the edge list. 

\begin{table}[h!]
\caption{real dataset statistics}
\vspace{2mm}
\begin{center}
\begin{sc}
\begin{tabular}{lcccccc}
\toprule
            & ACM          & DBLP           & Arxiv-2007          &Arxiv-2009  & Arxiv-2016          &Arxiv-2018\\
\midrule
$\#$nodes          & $7410$  & $5578$ & $4980$ &  $9410$ & $69499$ & $120740$\\
$\#$edges        & $11135    $ & $7341$ & $5849$    &$13179$       & $232419$ & $615415$\\
Node feature dimension    & $7537    $ & $7537  $ & $128$   &    $128$  & $128$   &    $128$   \\
$\#$labels        & $6    $ & $6  $ & $40$    &$40$  & $40$    &$40$      \\
\bottomrule
\label{table:datastats}
\end{tabular}
\end{sc}
\end{center}
\vskip -0.7cm
\end{table}

\begin{table}[h!]
\caption{MAG dataset statistics}
\vspace{2mm}
\begin{center}
\begin{sc}
\begin{tabular}{lcccccc}
\toprule
            & US          & CN           & DE          &JP  & RU          &FR\\
\midrule
$\#$nodes          & $132558$  & $101952$ & $43032$     &  $37498$  & $32833$     &  $29262$   \\
$\#$edges        & $697450$ & $285561$ & $126683$    &$90944$    & $67994$     &  $78222$    \\
Node feature dimension    & $128    $ & $128  $ & $128$   &    $128$  & $128$   &    $128$   \\
$\#$labels        & $20$ & $20  $ & $20$    &$20$  & $20$    &$20$ \\
\bottomrule
\label{table:datastats}
\end{tabular}
\end{sc}
\end{center}
\vskip -0.7cm
\end{table}

\begin{table}[h!]
\caption{Pileup dataset statistics}
\vspace{2mm}
\begin{center}
\begin{sc}
\begin{tabular}{lcccccc}
\toprule
            & gg-10          & qq-10           & gg-30          &qq-30  &gg-50  & gg-140          \\
\midrule
$\#$nodes          & $18611$  & $17242$ & $41390$     &  $38929$ & $60054$     &  $154750$     \\
$\#$edges        & $ 53725 $ & $42769$ & $173392$    &$150026$  & $341930$    &$2081229$    \\
Node feature dimension    & $28    $ & $28  $ & $28$   &    $28$  & $28$   &    $28$   \\
$\#$labels        & $2    $ & $2  $ & $2$    &$2$  & $2$    &$2$      \\
\bottomrule
\label{table:datastats}
\end{tabular}
\end{sc}
\end{center}
\vskip -0.7cm
\end{table}

\textbf{DBLP and ACM} are two paper citation networks obtained from DBLP and ACM, originally from~\cite{tang2008arnetminer} and processed by~\cite{wu2020unsupervised}. We use the processed version. Nodes are papers and undirected edges represent citations between papers. The goal is to predict the 6 research topics of a paper: “Database”, “Data mining”, “Artificial intelligent”, “Computer vision”, “Information Security” and
"High Performance Computing".

\textbf{Arxiv} introduced in~\cite{hu2020open} is another citation network of Computer Science (CS) Arxiv papers to predict 40 classes on different subject areas. The feature vector is a 128-dimensional word2vec vector with the average embedding of the paper's title and abstract. Originally it is a directed graph with directed citations between papers, we convert it into an undirected graph. 

\subsubsection{More details MAG datasets}
MAG is a subset of the Microsoft Academic Graph (MAG) as detailed in~\cite{hu2020open, wang2020microsoft}, originally containing entities as papers, authors, institutions, and fields of study. There are four types of directed relations in the original graph connecting two types of entities: an author "is affiliated with" an institution, an author "writes" a paper, a paper "cites" a paper, and a paper "has a topic of" a field of study. The node feature for a paper is the word2vec vector with 128 dimensions. The task is to predict the publication venue of papers, which in total has 349 classes. We curate the graph to include only paper nodes and convert directed citation links to undirected edges. Papers are split into separate graphs based on the country of the institution the corresponding author is affiliated with. Then, we detail the process of generating a separate ``paper-cites-paper'' homogeneous graph for each country from the original ogbn-mag dataset.

\textbf{Determine the country of origin for each paper.} The rule of determining the country of the paper is based on the country of the institute the corresponding author is affiliated with. Since the original ogbn-mag dataset does not indicate the information of the corresponding author, we retrieve the metadata of the papers via \href{https://openalex.org/}{OpenAlex},\footnote{This is an alternative way considering the \href{https://www.microsoft.com/en-us/research/project/academic/articles/microsoft-academic-to-expand-horizons-with-community-driven-approach/}{Microsoft Academic website and underlying APIs have been retired on Dec. 31, 2021.}}. Specifically, there is a boolean variable on OpenAlex boolean indicating whether an author is the corresponding author for each paper. Then, we further locate the institution this corresponding author is affiliated with and retrieve that institution's country to use as the country code for the paper. All these operations can be done through OpenAlex. However, not all papers include this corresponding author information on OpenAlex. Regarding the papers that miss this information, we determine the country of this paper through a majority vote based on the institution country of all authors in this paper. Namely, we first identify all authors recorded in the original dataset via the ``author---writes---paper'' relation and acquire the institute information for these authors through the relation of ``author---is\_affiliated\_with---institution''. Then, with the country information retrieved from OpenAlex for these institutions, we do a majority vote to determine the final country code for the paper. 



\textbf{Generate country-specific graphs.} Based on the country information obtained above, we generate a separate citation graph for a given country $C$. It will contain all papers that have a country code of $C$ and the edges indicating the citation relationships within these papers. The edge\_index set $\cal E$ is initialized as $\varnothing$. For each citation pair $(v_i,v_j)$ in the original ``paper-cites-paper'' graph, it is added to $\cal E$ \textit{iff.} both $v_i$ and $v_j$ have the same country affiliation $C$. We then obtain the node set $\cal V$ based on all unique nodes appearing in $\cal E$. In the scope of this work, we only focus on the top 19 publication venues with the most papers for classification and combine the rest of the classes into a single dummy class.

\subsubsection{More details for HEP datasets}
\label{app:morehepdata}
Initially, there are multiple graphs with each graph representing a collision event in the large hadron collider (LHC). Here, we collate the graphs together to form a single large graph. We use 100 graphs in each domain to create the single source and target graph respectively. In the source graph, the nodes in 60 graphs are used for training, 20 are used for validation and 20 are used for testing. In the target graph, the nodes in 20 graphs are used for validation and 80 are used for testing. The particles can be divided into charged and neutral particles, where the labels of the charged particles are known by the detector. Therefore, the classifications are only done on the neutral particles. The node features contain the particle's position in $\eta$ axis, pt as energy, the pdgID one hot encoding to indicate the type of particle, and the label of the particle (label for changed, unknown for neutral) to help with classification as neighborhood information. 

Pileup (PU) levels indicate the number of other collisions in the background event, it is closely related to the label distribution of LC and OC. For instance, a high PU graph will have mostly OC particles and few LC particles. Also, it will cause significant CSS as the distribution of particles easily influences the connections between them. The physical processes correspond to different types of signal decay of the particles, which mainly causes some slight feature shifts and nearly no LS or CSS under the same PU level. 

\subsection{Detailed experimental setting}
\label{app:expset}

\textbf{Model architecture}
The backbone model is GraphSAGE with mean pooling having 3 GNN layers and 2 MLP layers for classification. The hidden dimension for GNN is 300 for Arxiv and MAG, 50 for Pileup,  128 for the DBLP/ACM dataset and 20 for synthetic datasets. The classifier dimension 300 for Arxiv and MAG, 50 for Pileup,  40 for DBLP/ACM dataset and 20 for synthetic datasets. If there is adversarial training with a domain classifier for some baselines, it has 3 layers and the hidden dimension is the same as the GNN dimension. All experiments are repeated three times. 

\textbf{Hardware} All experiments are run on NVIDIA RTX A6000 with 48G memory and Quadro RTX 6000 with 24G memory. Specifically, for the UDAGCN baselines, we try with the 48G memory GPU but still out of memory. 

\textbf{Synthetic Datasets}
The synthetic dataset is generated under the contextual stochastic block model (CSBM), where there are in total of 6000 nodes and 3 classes. We vary the edge connection probability matrix and the node label distribution in different settings. The node features are generated from a Gaussian distribution where $\bP_0 = \dN([1, 0, 0], \sigma^2I)$, $\bP_1 = \dN([0, 1, 0], \sigma^2I)$ and $\bP_2 = \dN([0, 0, 1], \sigma^2I), \sigma = 0.3$, and the distribution is the same for the source and target graph in all settings. 
We denote the format of edge connection probability matrix as $ \mB=\left[\begin{array}{ccc}
    p & q & q\\ 
    q & p & q\\
    q & q & p\end{array}\right]$, where $p$ is the intra-class edge probability and $q$ is the inter-class edge probability. 
\begin{itemize}

    \item The source graph has $\bP_Y = [1/3, 1/3, 1/3]$ and $p = 0.02, q = 0.005$. 

    \item For setting 1 and 2 with the shift in only class ratio, they have the same $\bP_Y$, and setting 1 has $p = 0.015, q = 0.0075$ and setting 2 has $p = 0.01, q = 0.01$. 

    \item For setting 3 and 4 with the shift in only cardinality, they have the same $\bP_Y$, and setting 3 has $p = 0.02/2, q = 0.005/2$ and setting 4 has $p = 0.02/4, q = 0.005/4$.

    \item For setting 5 and 6 with the shift in both class ratio and cardinality, they have the same $\bP_Y$, and setting 5 has $p = 0.015/2, q = 0.0075/2$ and setting 6 has $p = 0.01/2, q = 0.01/2$.

    \item For setting 7 and 8 with shifts in both CSS and label shift, they have the same edge connection probability as $p = 0.015/2, q = 0.0075/2$ but different label distributions. Setting 7 has $\bP_Y = [0.5, 0.25, 0.25]$ and setting 8 has $\bP_Y = [0.1, 0.3, 0.6]$.

\end{itemize}

\textbf{Pileup}
Regarding the experiments studying the shift in pileup levels, the pair with PU10 and PU30 is from signal qq. The other two pairs with PU10 and PU50, PU30 and PU140 are from signal gg. The experiments that study the shift in physical processes are from the same PU level 10. Compared to the Pileup datasets used in the StruRW paper~\cite{liu2023structural}, we investigate the physical process shift with datasets from signal qq and signal gg instead of signal gg and signal $Z(\nu\nu)$. Also, we conduct more experiments to study the pileup shifts under the same physical process being signal qq (PU10 vs. PU30) or signal gg (PU10 vs. PU50 and PU30 vs. PU140). In addition, the StruRW paper treats each event as a single graph. They train the algorithm using multiple training graphs and adopt the edge weights as the average from each graph. In this paper, we collate the graphs for all events together for training and weight estimations.  

\textbf{Arxiv}
The graph is formed based on the ending year, meaning that the graph contains all nodes till the specified ending year. For instance, for the experiments where the source papers ended in 2007, the source graph contains all nodes and edges associated with papers that were published no later than 2007. Then, if the target years are from 2014 to 2016, then the entire target graph contains all papers published till 2016, but we only evaluate on the papers published from 2014 to 2016. 

\textbf{DBLP/ACM}
Since we observe that this dataset presents additional feature shift, so we additionally add adversarial layers to align the node representations. Basically, it is the combination of \proj with label-weighted adversarial feature alignment, and the hyperparameters with additional adversarial layers are the same with DANN and will be detailed below. Also, note that to systematically control the label shift degree in this relatively small graph ( $<$ 10000 nodes), the split of nodes for training/validation/testing is done regarding each class of nodes. This is slightly different from the data in previous papers using this dataset, so the results may not be directly comparable.

\subsection{Hyperparameter tuning}
\label{app:hyperparameter}

Hyperparameter tuning involves adjusting $\delta$ for edge probability regularization in $\mgamma$ calculation and $\lambda$ for L2 regularization in the least square optimizations for $\mw$ and $\mbeta$. Selecting $\delta$ correlates to the degree of structure shift and $\lambda$ is chosen based on the number of labels and classification performance. In datasets like Arxiv and MAG, where classification is challenging and labels are numerous, leading to ill-conditioned or rank-deficient confusion matrices, a larger $\lambda$ is required. For simpler tasks with fewer classes, like synthetic and low PU datasets, a lower $\lambda$ suffices. $\delta$ should be small for larger CSS (MAG and Pileup) and large with smaller CSS (Arxiv and physical process shift in Pileup) to counteract the spurious $\mgamma$ value that may caused by variance in edge formation. Below is the detailed range of hyperparameters.

The learning rate is 0.003 and the number of epochs is 400 for all experiments. The hyperparameters are tuned mainly for the robustness control, as the $\delta$ in regularizing edges and $\lambda$ in L2 regularization for optimization of $\mw$ and $\mbeta$. 

Here, for all datasets, $\lambda_{\mbeta}$ for $\mbeta$ is chosen from $\{0.005, 0.01, 0.1, 1, 5\}$ to reweight the ERM loss to handle the LS. Additionally, we also consider reweighting the ERM loss by source label distribution together. Specifically, we found it useful in the case with imbalanced training label distribution, like both directions in DBLP/ACM datasets, transitioning from high PU to low PU, and the Arxiv training with papers pre-2007 and pre-2009. In other cases, we do not reweight the ERM loss by source label distribution.
\begin{itemize}

\item For the synthetic datasets, the $\delta$ is selected from $\{1\mathrm{e}{-6}, 1\mathrm{e}{-5}, 1\mathrm{e}{-4}\}$, $\lambda_\mw$ is selected from $\{0.005, 0.01, 0.1\}$

\item For the MAG dataset, the $\delta$ is selected from $\{1\mathrm{e}{-5}, 1\mathrm{e}{-4}, 1\mathrm{e}{-3}\}$, $\lambda_\mw$ is selected from $\{0.1, 1, 5, 10\}$

\item For the DBLP/ACM dataset, the $\delta$ is selected from $\{5\mathrm{e}{-5}, 1\mathrm{e}{-4}, 5\mathrm{e}{-4}\}$, $\lambda_\mw$ is selected from $\{20, 25, 30\}$

\item For the Pileup dataset, regarding the settings with pileup shift, $\delta$ is selected from $\{1\mathrm{e}{-6}, 1\mathrm{e}{-5}, 1\mathrm{e}{-4}\}$, $\lambda_\mw$ is selected from $\{0.005, 0.01, 0.1, 1\}$. Regarding the settings with physical process shift, $\delta$ is selected from $\{1\mathrm{e}{-5}, 1\mathrm{e}{-4}, 5\mathrm{e}{-4}\}$, $\lambda_\mw$ is selected from $\{1, 5, 10, 20\}$

\item For the Arxiv dataset, regarding the settings with training data till 2007, the $\delta$ is selected from $\{5\mathrm{e}{-3}, 1\mathrm{e}{-2}, 3\mathrm{e}{-2}\}$, $\lambda_\mw$ is selected from $\{1, 2, 5\}$. Regarding the settings with training data till 2009, the $\delta$ is selected from $\{3\mathrm{e}{-2}, 5\mathrm{e}{-2}, 8\mathrm{e}{-2}\}$, $\lambda_\mw$ is selected from $\{15, 20, 25\}$. Regarding the settings with training data till 2011, the $\delta$ is selected from $\{3\mathrm{e}{-4}, 5\mathrm{e}{-4}, 8\mathrm{e}{-4}\}$, $\lambda_\mw$ is selected from $\{30, 50, 80\}$
\end{itemize}

\subsection{Baseline Tuning}
\label{app:baseline}

\begin{itemize}

\item For DANN, we tune two hyperparameters as the coefficient before the domain alignment loss and the max value of the rate added during the gradient reversal layer. The rate is calculated as $q = \min((\text{epoch}+1)/\text{nepochs}), \text{max-rate})$. For all datasets, DA loss coefficient is selected from $\{0.2, 0.5, 1\}$ and max-rate is selected from $\{0.05, 0.2, 1\}$.

\item For IWDAN, we tune three hyperparameters, the same two parameters as the coefficient before the domain alignment loss and the max value of the rate added during the gradient reversal layer. For all datasets, DA loss coefficient is selected from $\{0.5, 1\}$ and max-rate is selected from $\{0.05, 0.2, 1\}$. Also, we tune the coefficient to update the label weight calculated after each epoch as $(1-\lambda) * \text{new weight} + \lambda * \text{previous weight}$, where $\lambda$ is selected from $\{0, 0.5\}$.

\item For SpecReg, we totally tune for 5 hyperparameters and we follow the original hyperparameters for the dataset Arxiv and DBLP/ACM. For DBLP/ACM dataset, $\gamma_{\text{adv}}$ is selected from $\{0.01, 0.2\}$, $\gamma_{\text{smooth}}$ is selected from $\{0.01, 0.1\}$, threshold-smooth is selected from $\{0.01, -1\}$, $\gamma_{\text{mfr}}$ is selected from $\{0.01, 0.1\}$, threshold-mfr is selected from $\{0.75, -1\}$. For Arxiv dataset, $\gamma_{\text{adv}}$ is selected from $\{0.01\}$, $\gamma_{\text{smooth}}$ is selected from $\{0, 0.1\}$, threshold-smooth is selected from $\{0, 1\}$, $\gamma_{\text{mfr}}$ is selected from $\{0, 0.1\}$, threshold-mfr is selected from $\{0, 1\}$. For the other datasets, $\gamma_{\text{adv}}$ is selected from $\{0.01\}$, $\gamma_{\text{smooth}}$ is selected from $\{0.01, 0.1\}$, threshold-smooth is selected from $\{0.1, 1\}$, $\gamma_{\text{mfr}}$ is selected from $\{0.01, 0.1\}$, threshold-mfr is selected from $\{0.1, 1\}$. Note that for the DBLP and ACM datasets, we implement their module (following their published code) on top of GNN instead of the UDAGCN model for fair comparison among baselines. 

\item For UDAGCN, we also tune the two hyperparameters from DANN as the coefficient before the domain alignment loss and the max value of the rate added during the gradient reversal layer. The rate is calculated as $q = \min((\text{epoch}+1)/\text{nepochs}), \text{max-rate})$. For all datasets, DA loss coefficient is selected from $\{0.2, 0.5, 1\}$ and max-rate is selected from $\{0.05, 0.2, 1\}$.

\item For StruRW, we use the StruRW-ERM baseline and we tune the $\lambda$ that controls the edge weights in GNN as $(1-\lambda) + \lambda * \text{edge weight}$ with range $\{0.1, 0.3, 0.7, 1\}$ and the epochs to start reweighting the edges from $\{100, 200, 300\}$.

\end{itemize}

\subsection{Shift statistics of datasets}
\label{app:shift_stats}
We design two metrics to measure the degree of structure shift in terms of CSS and LS. 

The metric of CSS is based on the node label distribution in the neighborhood of each class of nodes as $\bP_\dU(Y_v| Y_u, v\in \dN_u)$. Specifically, we calculate the total variation distance of this conditional neighborhood node label distribution of each class $\forall i \in \dY$ as:
\begin{align*}
    &TV(\bP_\dS(Y_v| Y_u = i, v\in \dN_u), \bP_\dT(Y_v| Y_u = i, v\in \dN_u)) \\
    &= \frac{1}{2}\lVert\bP_\dS(Y_v| Y_u = i, v\in \dN_u) - \bP_\dT(Y_v| Y_u = i, v\in \dN_u)\rVert_1\\
    &=\frac{1}{2}\sum_{j\in \dY}|\bP_\dS(Y_v = j| Y_u = i, v\in \dN_u) - \bP_\dT(Y_v = j| Y_u = i, v\in \dN_u)|
\end{align*}
Then, we take a weighted average of the TV distance for each class based on the label distribution of end nodes conditioned on an edge $\bP_\dU(Y_u|e_{uv}\in \dE_\dU)$ since classes that appear more often as a center node in the neighborhood may affect more in the structure shift. The CSS-src in the table indicates the weighted average by $\bP_\dS(Y_u|e_{uv}\in \dE_\dS)$ and CSS-tgt in the table indicates the weighted average by $\bP_\dT(Y_u|e_{uv}\in \dE_\dT)$, and CSS-both is the average of CSS-src and CSS-tgt.

The metric of LS is calculated as the total variation distance between the source and target label distribution as:
\begin{align*}
    TV(\bP_\dS(Y), \bP_\dT(Y)) = \frac{1}{2}\sum_{i\in \dY}|\bP_\dS(Y = i) - \bP_\dT(Y = i)|
\end{align*}

The shift metrics for each dataset are shown in the following tables.

\begin{table}[h!]
\caption{MAG dataset shift metrics}
\vspace{2mm}
\begin{center}
\begin{adjustbox}{width = 1\textwidth}
\begin{sc}
\begin{tabular}{lcccccccccc}
\toprule
  &   $US \rightarrow CN$ &  $US\rightarrow DE$  &     $US\rightarrow JP$    & $US\rightarrow RU$     & $US\rightarrow FR$ & $CN \rightarrow US$ &  $CN\rightarrow DE$  &     $CN\rightarrow JP$  & $CN\rightarrow RU$     & $CN\rightarrow FR$\\
\midrule
CSS-src        & $0.1639$ & $0.2299$ & $0.1322$    &$0.3532$    & $0.2530$     &  $0.2062$ & $0.1775$ &$0.1487$ &$0.2120$ & $0.1540$    \\
CSS-tgt        & $0.2062$ & $0.2217$ & $0.1438$    &$0.2866$    & $0.2854$     &  $0.1639$ & $0.2311$ &$0.1323$ &$0.2027$ & $0.2661$    \\
CSS-both        & $0.1850$ & $0.2258$ & $0.1380$    &$0.3199$    & $0.2692$     &  $0.1850$ & $0.2043$ &$0.1405$ &$0.2073$ & $0.2100$    \\
LS        & $0.2734$ & $0.1498$ & $0.1699$    &$0.3856$    & $0.1706$     &  $0.2734$ & $0.2691$ &$0.1522$ &$0.2453$ & $0.2256$    \\

\bottomrule
\label{table:magstats}
\end{tabular}
\end{sc}
\end{adjustbox}
\end{center}
\vskip -0.7cm
\end{table}

\begin{table}[h!]
\caption{HEP pileup dataset shift metrics}
\vspace{2mm}
\begin{center}
\begin{adjustbox}{width = 0.95\textwidth}
\begin{sc}
\begin{tabular}{lcccccccc}
\toprule
& \multicolumn{6}{c}{Pileup Conditions} & \multicolumn{2}{c}{Physical Processes} \\
 Domains  & $\text{PU}10 \rightarrow 30$    & $\text{PU}30 \rightarrow 10$ & PU$10\rightarrow 50$ & PU$50\rightarrow 10$ & PU$30\rightarrow 140$ & PU$140\rightarrow 30$ &$gg \rightarrow qq$      &     $qq \rightarrow gg$      \\
\midrule
CSS-src        & $0.1941$ & $0.1567$ & $0.2910$    &$0.2111$    & $0.1871$     &  $0.1307$  & $0.0232$     &  $0.0222$   \\
CSS-tgt        & $0.1567$ & $0.1941$ & $0.2111$    &$0.2910$    & $0.1307$     &  $0.1871$  & $0.0222$     &  $0.0232$   \\
CSS-both        & $0.1754$ & $0.1754$ & $0.2510$    &$0.2510$    & $0.1589$     &  $0.1589$  & $0.0227$     &  $0.0227$   \\
LS        & $0.2258$ & $0.2258$ & $0.3175$    &$0.3175$    & $0.1590$     &  $0.1590$  & $0.0348$     &  $0.0348$  \\

\bottomrule
\label{table:hepstats}
\end{tabular}
\end{sc}
\end{adjustbox}
\end{center}
\vskip -0.7cm
\end{table}

\begin{table}[h!]
\caption{Real dataset shift metrics}
\vspace{2mm}
\begin{center}
\begin{adjustbox}{width = 0.95\textwidth}
\begin{sc}
\begin{tabular}{lcccccccc}
\toprule
 & \multicolumn{2}{c}{1950-2007} & \multicolumn{2}{c}{1950-2009} & \multicolumn{2}{c}{1950-2011} & \multicolumn{2}{c}{DBLP and ACM}\\
 Domains  & $2014-2016$    & $2016-2018$ & $2014-2016$ & $2016-2018$ & $2014-2016$ & $2016-2018$  & $A\rightarrow D$ & $D\rightarrow A$\\
\midrule
CSS-src         & $0.2070$ & $0.2651$ & $0.1531$    &$0.2010$    & $0.1023$     &  $0.1443$   & $0.1400$     &  $0.2241$   \\
CSS-tgt         & $0.2404$ & $0.3060$ & $0.2043$    &$0.2737$    & $0.1504$     &  $0.2301$   & $0.2241$     &  $0.1400$   \\
CSS-both         & $0.2237$ & $0.2844$ & $0.1787$    &$0.2374$    & $0.1263$     &  $0.1872$   & $0.1820$     &  $0.1820$   \\
LS        & $0.2938$ & $0.4396$ & $0.2990$    &$0.4552$    & $0.2853$     &  $0.4438$   & $0.3435$     &  $0.3435$ \\

\bottomrule
\label{table:realstats}
\end{tabular}
\end{sc}
\end{adjustbox}
\end{center}
\vskip -0.7cm
\end{table}

\begin{table}[h!]
\caption{Synthetic CSBM dataset shift metrics}
\vspace{2mm}
\begin{center}
\begin{adjustbox}{width = 0.95\textwidth}
\begin{sc}
\begin{tabular}{lcccccccc}
\toprule
 & \multicolumn{2}{c}{CSS (only class ratio shift)} & \multicolumn{2}{c}{CSS (only degree shift)}  & \multicolumn{2}{c}{CSS (shift in both)} & \multicolumn{2}{c}{CSS + LS}\\
\midrule
CSS-src       & $0.1655$ & $0.3322$ & $0.0042$    &$0.0053$    & $0.1673$     &  $0.3308$  & $0.1777$     &  $0.2939$ \\
CSS-tgt       & $0.1655$ & $0.3322$ & $0.0042$    &$0.0053$    & $0.1673$     &  $0.3308$  & $0.1215$     &  $0.1840$ \\
CSS-both       & $0.1655$ & $0.3322$ & $0.0042$    &$0.0053$    & $0.1673$     &  $0.3308$  & $0.1496$     &  $0.2389$ \\
LS        & $0$ & $0$ & $0$    &$0$    & $0$     &  $0$  & $0.1650$     &  $0.2667$  \\

\bottomrule
\label{table:csbmstats}
\end{tabular}
\end{sc}
\end{adjustbox}
\end{center}
\vskip -0.7cm
\end{table}

\subsection{More results analysis}

\label{app:moreanalysis}
In this section, we will discuss more regarding our experimental results and provide some explanations of our \proj performance and comparison over the baselines.

\textbf{Synthetic Data}
As discussed in the main text, our major conclusion is that our \proj is practical for handling alignment by focusing only on the conditional neighborhood node label distribution to address class ratio shifts. Although \proj's performance is not the best among the baselines when there is a shift in node degree, we argue that in practice, ERM training alone is adequate under node degree shifts, especially when the graph size is large. Here, the graph size is only 6000—a small size in practical terms—and the ERM performance with a node degree shift ratio of 2 already achieved 99$\%$ accuracy. It should be perfect when the graph size is larger. Also, in the second setting with a degree shift, the degree ratio shift of 4 is relatively large, but the accuracy remains at 96$\%$. We expect that the decay should be negligible when the graph size is larger, often at least 10 times larger than 6000.

Regarding performance gains in addressing structure shifts, we observe that \projew demonstrates significant improvements, particularly in the second case of each scenario with larger degree shifts. Among the baselines, StruRW consistently outperforms others in different CSS scenarios, except in node degree shifts. This is expected since StruRW is specifically designed to handle CSS. Plus, in the synthetic CSBM data used here, the instability commonly associated with using hard pseudo-labels does not significantly affect performance due to easy classification task. However, compared to our \proj methods, StruRW still shows limited performance even with only CSS shifts. When both CSS and LS shifts occur, IWDAN emerges as the best baseline, as its algorithm addresses both conditional shifts and LS in non-graph problems effectively. In synthetic datasets, shifts are less complex than in real-world graph-structured data, allowing IWDAN to lead to empirical improvements. Our \projb outperforms all in scenarios involving CSS and LS shifts. By comparing \projew and \projlw, we found that when both CSS and LS occur, the impact of CSS often dominates, making \projew more effective than \projlw. However, this observation is based on our source graph's balanced label distribution and does not hold in the HEP pileup dataset when moving from highly imbalanced data (high PU conditions) to more balanced data (low PU conditions), which we will discuss later in relation to the Pileup dataset.

Another advantage of using synthetic dataset results is that they help us understand the experimental results on real datasets better. For example, by combining the shift statistics from Table~\ref{table:csbmstats} with the experimental results, we see that a CSS metric value around 0.16 does not significantly impact the performance, thus not clearly demonstrating the effectiveness of \proj. However, \proj methods show substantial benefits under larger shifts, with metric values around 0.3.

\textbf{MAG}
Overall, our \proj methods demonstrated significant advantages over the majority of baseline approaches, including the top-performing baseline, StruRW. When considering the relative improvement to ERM performance (as well as the performance of other baselines, except StruRW), there is an average relative benefit of over 45$\%$ when training on the US graph and nearly 100$\%$ when training on the CN graph. This substantial improvement corroborates our discussion regarding the existing gap, where current methods fall short in effectively addressing structure shifts. As detailed in the main text, our \projew methods not only surpass StruRW in performance but also yield additional benefits from handling LS, as the LS degree indicated in Table~\ref{table:magstats}. We believe the primary advantages stem from our principled approach to addressing CSS with $\mgamma$, which remains unbiased by LS, and the enhanced robustness afforded by using soft label predictions and regularized least square estimations. This also elucidates the shortcomings of IWDAN, a non-graph method for addressing conditional shift and LS, which underperforms under the MAG dataset conditions as discussed in the main text. 

We next explore the relationship between performance improvements and the degree of structure shift. The experimental results align closely with the CSS measurements shown in Table~\ref{table:magstats}. For example, the transitions from US to JP and CN to JP involve a smaller degree of CSS compared to other scenarios, resulting in relatively modest improvements. Similarly, generalizations between the US and CN also show fewer benefits. Conversely, the impact of LS is less evident in the outcomes associated with \projlw, as this approach alone yields only marginal improvements. However, when we evaluate the additional gains from LS mitigation provided by \projb in comparison to \projew, scenarios with larger LS (such as $US\rightarrow CN$, $CN\rightarrow US$, $US\rightarrow RU$, and $CN\rightarrow DE$ ) demonstrate more substantial benefits.

\textbf{Pileup Mitigation}
The most crucial discussions concerning HEP pileup datasets are detailed in the main text, particularly focusing on the distinct impacts of CSS and LS in transitions from high PU conditions to low PU conditions, and vice versa. This underscores that while the two directions have identical measures of LS, the direction of generalization is crucial. From a training perspective, it is clear that a model trained on a highly imbalanced dataset may neglect nodes in minor classes, leading to worse performance on more balanced datasets. To improve generalization, it is essential to adjust the classification loss to increase the model's focus on these minor nodes during training. This explains why \projew alone does not yield benefits in scenarios transitioning from high to low PU, and why \projlw becomes necessary. Conversely, when transitioning from low to high PU, \projew suffices to address CSS, as LS has a minimal effect on performance in this direction.

We then review baseline performance under the shift in pileup (PU) conditions. As noted in the main text, methods primarily addressing feature shifts, such as DANN, UDAGCN, and IWDAN, underperform, underscoring that PU conditions predominantly affect graph structure rather than node features. This observation aligns with the physical interpretation of PU shifts described in the dataset details in~\ref{app:morehepdata}. PU shift correlates with changes in the number of other collisions (OC) during collision events, directly influencing the OC
ratio and the pattern of node connections, as illustrated in Fig~\ref{fig:shift}. Given that node features are derived from particle labels (either OC or LC), the feature distribution remains largely unchanged despite variations in the OC to LC ratio. Consequently, feature shifts are minimal under PU conditions.

Consequently, the baselines like StruRW and SpecReg show some benefits over others in
regularizing and adjusting graph structure to handle structure shift. Specifically, SpecReg shows enhanced benefits during the transition from low PU to high PU, possibly due to its regularization of spectral smoothness, which mitigates edge perturbations beneficially under CSS conditions. Despite these improvements in the pileup dataset, SpecReg does not perform as well in other datasets characterized by CSS, such as MAG. This may be attributed to the fact that spectral regularization is more effective in scenarios with a limited variety of node connections, akin to the binary cases in the pileup dataset. However, it appears less capable of managing more complex shifts in neighborhood distribution involving multiple classes, as seen in datasets like MAG or Arxiv.

Conversely, StruRW achieves comparable performances to \projb in scenarios transitioning from high PU to low PU, predominantly influenced by LS. This effectiveness is likely due to the fact that their edge weights incorporate $\mw$, which includes $\malpha$ that implicitly contains the node label ratio. While our analysis suggests that using $\mw$ directly is not a principled approach for addressing CSS and LS, it proves beneficial in scenarios where LS significantly affects outcomes, providing a better calibration compared to approaches that do not address LS, like \projew. However, while StruRW holds an advantage over \projew, its performance still lags behind \projb, which offers a more systematic solution for both CSS and LS.

\textbf{Arxiv}
Results from the Arxiv datasets align well with expectations and the shift measures detailed in Table~\ref{table:realstats}. Notably, CSS is most pronounced when the source graph includes papers published before 2007, with experimental results showing the most substantial improvements under these conditions. In the scenario where papers from 2016-2018 are used for testing, both \projew and \projb outperform the baselines significantly, yet \projlw emerges as the superior variant. This aligns with the LS metrics reported, which indicate a significant LS in this context. A similar pattern is observed when training on papers from before 2011 and testing on those from 2016-2018, with \projlw achieving the best results.

For the target comprising papers from 2014-2016, our model continues to outperform baselines, albeit with a narrower margin compared to other datasets. In this case, not only does our method perform comparably, but all baselines also show similar performance levels, suggesting limited potential for improvements in this dataset. Furthermore, insights from synthetic experiments reveal that a CSS metric value around 0.16 does not lead to substantial performance degradation, which accounts for the moderate improvements over baselines in scenarios other than those using the source graph with pre-2007 papers.

In our evaluation of baseline performances, we note that StruRW emerges as the superior baseline, effectively handling CSS. In contrast, IWDAN tends to underperform relative to other baselines, which we attribute primarily to inaccuracies and instability in its label weight estimation. Designed for computer vision tasks where accuracy is typically high, IWDAN lacks mechanisms for regularization and robustness in its estimation processes, leading to its underperformance in our experiments involving tasks with a total of 40 classes. Meanwhile, the performance of other baselines is comparable to the ERM training.

\textbf{DBLP/ACM}
The generalization results between the DBLP and ACM datasets offer insights into the comparative effects of feature shift versus structure shift. As discussed in the main text, baselines focused on feature alignment tend to perform well in this dataset, suggesting that this dataset is predominantly influenced by feature shifts rather than structural shifts and that feature alignment can address the shift effectively. This trend also leads to non-graph methods performing comparably to, or even better than, graph-based methods due to the dominance of feature shifts.

In response to these observations, we integrated adversarial training into our method to align feature shifts and investigated whether additional benefits could be derived from mitigating structure shifts. Our analysis of the experimental results, in conjunction with the shift measures detailed in Table~\ref{table:realstats}, reveals a significant LS between these two datasets. Specifically, we note that the ACM graph exhibits a more imbalanced label distribution compared to the DBLP graph. This finding aligns with the experimental outcomes, where \projlw emerges as the most effective model and IWDAN as the best baseline when training on ACM and testing on DBLP. Both methods are adept at handling LS, supporting our earlier assertion that LS plays a crucial role when transitioning from an imbalanced dataset to a more balanced one. Conversely, in the transition from DBLP to ACM, where LS has a lesser impact, \projb proves to be the most effective.

\end{document}